\newcommand{\Enter}{\textsc{Enter}}
\newcommand{\Exit}{\textsc{Exit}}
\newcommand{\Terminate}{\textsc{Terminate}}
\newcommand{\ResolveExit}{\textsc{ResolveExit}}
\newcommand{\ResolveEnter}{\textsc{ResolveEnter}}
\newcommand{\ResolveTerminate}{\textsc{ResolveTerminate}}
\newcommand{\e}[1]{\textbf{#1}}
\newcommand{\setof}[2]{\ensuremath{\left\{ #1\hspace{1ex}|\hspace{1ex}#2 \right\}}}
\newcommand{\seq}[1]{\ensuremath{\left< #1 \right>}}
\newcommand{\simg}[1]{\ensuremath{\underset{#1}{\sim}}}
\newcommand{\oplusg}[1]{\ensuremath{\underset{#1}{\oplus}}}
\newcommand{\ominusg}[1]{\ensuremath{\underset{#1}{\ominus}}}
\newcommand{\conf}[2]{\ensuremath{\simg{#1}\hspace{-0.5ex}(#2)}}
\newcommand{\perm}[2]{\ensuremath{{}^{#1}\mathrm{P}_{#2}}}
\newcommand{\Pl}{\ensuremath{\mathcal{P}}}
\newcommand{\Fail}{\State \textbf{fail}} 
\newcommand{\Ret}[1]{\State \textbf{return}~#1}
\newcommand{\Choose}[1]{\State \textbf{choose}~#1}
\newtheorem{lemma}{Lemma}
\newtheorem{theorem}{Theorem}
\newtheorem{proposition}{Proposition}
\begin{document}

\title{Exploiting Subgraph Structure in \\
Multi-Robot Path Planning}

\author{\name Malcolm R. K. Ryan \email malcolmr@cse.unsw.edu.au \\
       \addr ARC Centre of Excellence for Autonomous Systems\\
       University of New South Wales, Australia}


\maketitle
 
\begin{abstract}
Multi-robot path planning is difficult due to the combinatorial explosion of 
the search space with every new robot added. Complete search of the combined 
state-space soon becomes intractable. In this paper we present a novel form of 
abstraction that allows us to plan much more efficiently. The key to 
this abstraction is the partitioning of the map into subgraphs of known 
structure with entry and exit restrictions which we can represent compactly. Planning then becomes a search in the much smaller space of subgraph configurations. Once an 
abstract plan is found, it can be quickly resolved into a correct (but possibly
sub-optimal) concrete plan without the need for further search. We prove that 
this technique is sound and complete and demonstrate its practical 
effectiveness on a real map.

A contending solution, prioritised planning, is also evaluated and shown to have
similar performance albeit at the cost of completeness. The two approaches are
not necessarily conflicting; we demonstrate how they can be combined into a
single algorithm which outperforms either approach alone.
\end{abstract}

\section{Introduction}
\label{intro}

There are many scenarios which require large groups of robots to 
navigate around a shared environment. Examples include: delivery robots in an 
office \cite{hada2001mmr}, a warehouse \cite{everett1994rwi}, a shipping yard \cite{alami1998mrc}, or a mine \cite{alarie2002oss}; or even virtual armies in a computer 
wargame \cite{buro2004rga}. In each case we have many robots with independent goals which must traverse a shared environment without colliding with one another. When planning 
a path for just a single robot we can usually consider the rest of the world to 
be static, so that the world can be represented by a graph called a 
\emph{road-map}. The path-planning problem then amounts to finding a path in the 
road-map, for which reasonably efficient algorithms exist. However, in a 
multi-robot scenario the world is not static. We must not only avoid collisions 
with obstacles, but also with other robots. 

Centralised methods \cite{barraquand91}, which treat the robots as a single 
composite entity, scale poorly as the number of robots increases. Decoupled 
methods \cite{lavalle98,erdmann86}, which first plan for each 
robot independently then resolve conflicts afterwards, prove to be much faster 
but are incomplete because many problems require robots to deliberately detour 
from their optimal path in order to let another robot pass. Even if a priority 
ordering is used \cite{vandenberg05}, requiring low priority robots to plan to 
avoid high-priority robots, problems can be found which cannot be solved with 
any priority ordering.

In realistic maps there are common structures such as roads, corridors and open spaces which produce particular topological features in the map which constrain the possible interactions of robots. In a long narrow corridor, for instance, it may be impossible for one robot to overtake another and so robots must enter and exit in a first-in/first-out order. On the other hand, a large open space may permit many robots to pass through it simultaneously without collision. 

We can characterise these features as particular kinds of subgraphs occurring in the road-map. If we can decompose a map into a collection of such simple subgraphs, then we can build plans hierarchically, first planning the movements from one subgraph to another, then using special-purpose planners to build paths within each subgraph.

In this paper we propose such an abstraction. We limit ourselves to considering an
homogeneous group of robots navigating using a shared road-map. We identify
particular kinds of subgraphs in this road-map which place known constraints on
the ordering of robots that pass through them. We use these constraints to
make efficient planning algorithms for traversing each kind of subgraph, and we
combine these local planners into a hierarchical planner for solving arbitrary
problems. 

This abstraction can be used to implement both centralised and prioritised planners, and we demonstrate both in this paper. Unlike most heuristic abstractions, this method is sound and complete. That is, when used with a centralised search it is guaranteed to find a correct plan if and only if one exists. This guarantee cannot be made when prioritised search is used, however the two-stage planning process means that a prioritised planner with the abstraction can often find plans that would not be available to it otherwise. Experimental investigation shows that this approach is most effective in maps with only sparsely connected graph representations. 

\section{Problem Formulation}

We assume for this work that we are provided with a road-map in the form of a 
graph $G = (V,E)$ representing the connectivity of free space for a single 
robot moving around the world (e.g. a vertical cell decomposition or a 
visibility graph, \citeR{lavalle06}). We also have a set of robots $R = \{r_1, 
\ldots, r_k\}$ which we shall consider to be homogeneous, so a single map 
suffices for them all. We shall assume all starting locations and goals lie on
this road-map.

Further, we shall assume that the map is constructed so that collisions only
occur when one robot is entering a vertex $v$ at the same time as another robot
is occupying, entering or leaving this vertex. Robots occupying other vertices in
the map do not affect this movement.  With appropriate levels of underlying
control these assumptions can be satisfied for most real-world problems.

A simple centralised approach to computing a plan proceeds as follows: First, 
initialise every robot at its starting position, then select a robot and move 
it to a neighbouring vertex, checking first that no other robot is currently 
occupying that vertex. Continue in this fashion, selecting and moving one of
the robots at each step until each is at its goal. Pseudocode for this process is 
shown in Algorithm~\ref{alg:naive}. The code is presented as a non-deterministic 
algorithm, with choice points indicated by the \textbf{choose} operator, and 
backtracking required when the \textbf{fail} command is encountered. In practice, a 
search algorithm such as depth-first, breadth-first or A* search is necessary to
evaluate all the alternative paths it presents.

This algorithm does a complete search of the composite space $G^k = G \times G 
\times \dots \times G$, for $k = |R|$ robots. After eliminating vertices which 
represent collisions between robots, the size of the composite graph is given 
by: 
\begin{align*}
\left|V(G^k)\right| & = \perm{n}{k} \\
                           & = \frac{n!}{(n-k)!} \\
\left|E(G^k)\right| & = k \left|E(G)\right| \perm{(n-2)}{(k-1)} \\
                            & = k \left|E(G)\right|\frac{(n-2)!}{(n-k-1)!}
\end{align*}
where $n = 
|V(G)|$ and $k = |R|$. The running time of this algorithm will depend on the 
search algorithm used, but it can be expected to be very long for moderately 
large values of $n$ and $k$. 

\begin{algorithm}[bt]
\caption{A simple centralised planning algorithm.}
\label{alg:naive}
\small
\begin{algorithmic}[1]
\Function{Plan}{$G, a, b$}
\Comment{Build a plan from $a$ to $b$ in graph $G$.}
  \If {$a = b$}
    \Ret{\seq{}}
    \Comment{Nothing to do.}
  \EndIf
  \Choose{$r \in R$}
  \Comment{Choose a robot.}
  \State \textbf{select} $v_f : a[v_f] = r$
  \Comment{Find its location.}
  \Choose{$v_t \in \setof{v}{(v_f, v) \in G}$ }
  \Comment{Choose an edge.}
  \If{$a[v_t] \neq \Box$}
    \Fail
    \Comment{The destination is occupied; backtrack.}
  \Else
  \State $a[v_f] \gets \Box$
  \Comment{Move the robot from $v_f$ to $v_t$.}
  \State $a[v_t] \gets r$
  \Ret $(r, v_f, v_t).$\Call{Plan}{$G, a, b$}
  \Comment{Recurse.}
  \EndIf
\EndFunction
\end{algorithmic}
\end{algorithm} 

\section{Subgraph Abstraction}

\begin{figure}
\begin{center}
\includegraphics[width=\textwidth]{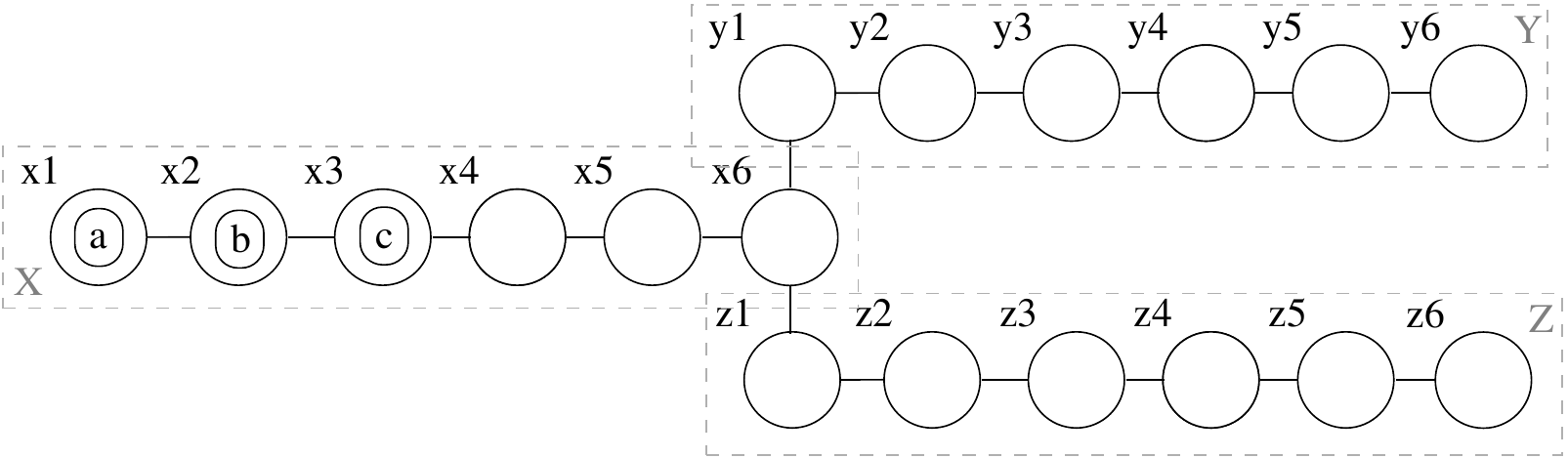}
\end{center}
\caption{A planning problem illustrating the use of subgraphs.}
\label{fig:stack_problem}
\end{figure}

Consider the problem shown in Figure~\ref{fig:stack_problem}. This road-map contains 18 vertices and 17 edges, and there are 3 robots to plan for. So, according to the above 
formulae, the composite graph has $18! / 15! = 4896$ vertices and $3  \times17 
\times 16! / 14! = 12240$ edges. A small map has expanded into a large search 
problem. But to a human mind it is obvious that a lot of these arrangements are 
equivalent. What is important is not the exact positions of the robots, but 
their ordering.

Consider the subgraph labeled $X$. We recognise this subgraph as a
\emph{stack}. That is, robots can only move in and out of this subgraph in a
last-in-first-out (LIFO) order. Robots inside the stack cannot change their
order without exiting and re-entering the stack. So if our goal is to reverse
the order of robots in X, we know immediately that this cannot be done without
moving all the robots out of the stack and then have them re-enter in the
opposite order. Once the robots are in the right order, rearranging them into
the right positions is trivial. Thus we can make a distinction between the \emph{arrangement} of the robots (in which we specify exactly which vertex each robot occupies) and the \emph{configuration} of the stack (in which we are only interested in their order).

Now $X$ has 6 vertices, so when there are $m$ robots in the stack,
there are $\perm{6}{m} = 6! / (6 - m)!$ possible arrangements. So the total number of 
arrangements is:
\begin{align*} 
\perm{6}{3} + 3 \times \perm{6}{2} + 3 \times \perm{6}{1} + \perm{6}{0} & = 120 + 3 \times 30 + 3 \times 6 + 1\\
& = 229
\end{align*}
In terms of deciding whether a robot can leave the stack, however, all we need 
to know is their order. So we need only represent $3! + 3 \times 2! + 3 \times 
1! + 1 = 16$ different configurations of the stack.

Subgraphs $Y$ and $Z$ are also stacks. Applying this analysis to all three, we
find that we can represent the abstract state space with only 60 different
states, and 144 possible transitions between states (moving the top-most robot off
one stack onto another). This is dramatically smaller than the composite map
space above.

A stack is a very simple kind of subgraph and we will need a larger collection
of canonical subgraphs to represent realistic problems. The key features we are
looking for are as follows:
\begin{enumerate}
  \item Computing transitions to and from the subgraph does not require
  knowledge of the exact arrangement of robots within the subgraph, only some
  more abstract configuration (in this case, their order).
  \item If two arrangements of robots share the same configuration, then
  transforming one into the other can be done easily without search,
  \item Therefore planning need only be done in the configuration space, which
  is significantly smaller.
\end{enumerate}
Later we will introduce three more subgraph types -- cliques, halls and rings --
which also share these properties and which are readily found in realistic
planning problems. But first we need to formalise the ideas of subgraph
planning.

\section{Definitions}

In this section we outline the concepts we will use later in the paper. A complete formal definition of these terms is provided in the Appendix, along with a proof of soundness and completeness of the subgraph planning process.

Given a map represented by a graph $G$ we partition it into a set of disjoint subgraphs ${S_1, \ldots, S_m}$. These subgraphs are \emph{induced}, i.e. an edge exists between two vertices in a subgraph if and only if it also exists in $G$.

An \emph{arrangement} $a$ of robots in $G$ is a 1-to-1 partial function $a : V(G) \rightarrow R$, which represents the locations of robots within $G$. If robot $r$ is in vertex $v$, we write $a(v) = r$. We can also speak of the arrangement of robots within a subgraph $S$. We will denote arrangements by the lowercase roman letters $a, b$

A \emph{configuration} of a subgraph $S$ is a set of equivalent arrangements of robots within $S$. Two arrangements are equivalent if there exists a plan to move robots from one to the other without any robots leaving the subgraph. We will denote a configuration of subgraph $S_x$ by $c_x$. The configuration of the whole map can then be represented as a tuple of subgraph configurations $\gamma = (c_1, \ldots, c_m)$.

There are two operators $\oplus$ and $\ominus$ which operate on configurations, representing a robot entering and leaving the subgraph respectively. When a robot $r$ moves between two subgraphs $S_x$ and $S_y$ their configurations change depending on the identity of the edge $(u,v)$ on which the robot traveled. We write:
\begin{align*}
&c_x' \in c_x \ominus (r,u), \\
&c_y' \in c_y \oplus (r,v)
\end{align*}
In complex subgraphs it is possible for such a transition to result in several possible configurations, so the operators $\oplus$ and $\ominus$ return sets. It is also possible that a transition is impossible from a particular configuration, in which case the operation returns the empty set.

An abstract plan $\Pi$ can now be defined as a sequence of transitions $\Sigma$ with intermediate configurations $\Gamma$. For every abstract plan between two arrangements there exists at least one corresponding concrete plan, and vice versa. All the subgraph transitions in the concrete plan must also exist in the abstract plan. The equivalence of arrangements in a configuration then guarantees the existence of the intermediate steps. See the Appendix for a complete proof.

\begin{algorithm}[h!]
\caption{Planning with subgraph abstraction.}
\label{alg:subgraph}
\small
\begin{algorithmic}[1]
\Function{Plan}{$G, \Pl, R, a, b$}
\Comment{Build a plan from $a$ to $b$ in $G$ using partition $\Pl$.}
  \State $\alpha \gets \gamma(a)$
  \Comment{Get the initial configuration.}
  \State $\beta \gets \gamma(b)$
  \Comment{Get the final configuration.}
  \State $\Pi \gets $ \Call{AbstractPlan}{$G, \Pl, R, \alpha, \beta$}
  \Comment{Build the abstract plan.}
  \State $P \gets $ \Call{Resolve}{$G, \Pl, \Pi, a, b$}
  \Comment{Resolve to a concrete plan.}
  \Ret{$P$}
\EndFunction
\end{algorithmic}
\begin{algorithmic}[1]
\Statex
\Function{AbstractPlan}{$G, \Pl, R, \alpha, \beta$}
\Comment{Build an abstract plan from $\alpha$ to $\beta$ in $G$ using $\Pl$.}
  \If {$\alpha = \beta$}
    \Ret{$(\seq{\beta}, \seq{})$}
    \Comment{Done.}
  \EndIf  
  \State $(c_1, \ldots, c_m) = \alpha$
  \Choose{$r \in R$}
  \Comment{Choose a robot.}
  \State{\textbf{select} $x : r \in range(c_x)$}
  \Comment{Find the subgraph it occupies.}
  \Choose{$S_y \in \Pl : (S_x, S_y) \in X$}
  \Comment{Choose a neighbouring subgraph.}
  \Choose{$(u, v) \in E(G) : u \in S_x, v \in S_y$}
  \Comment{Choose a connecting edge.}
  \Choose{$c_x' \in c_x \ominus (r, u)$}
  \Comment{Choose the resulting configurations of $S_x$ and $S_y$.}
  \Choose{$c_y' \in c_y \oplus (r, v)$}
  \State $\gamma \gets (c_1, \ldots, c_x', \ldots, c_y', \ldots, c_m)$
  \Comment{Construct the new configuration tuple.}
  \State $(\Gamma, \Sigma) \gets $ \Call{AbstractPlan}{$G, \Pl, R, \gamma,  \beta$}
  \Comment{Recurse.}
  \State $\Gamma' \gets \gamma . \Gamma$
  \State $\Sigma' \gets (r, u, v) . \Sigma$
  \Ret{$(\Gamma', \Sigma')$}
\EndFunction
\Statex
\end{algorithmic}
\begin{algorithmic}[1]
\Function{Resolve}{$G, \Pl, \Pi, a, b$}
\Comment{Resolve the abstract plan into a concrete plan.}
  \State $\Pi = (\Gamma, \Sigma)$
  \State $\Gamma = \seq{\gamma_0, \ldots, \gamma_l}$
  \State $\Sigma = \seq{s_1, \ldots, s_l}$
  \State $P \gets \seq{~}$
  \State $a^0 \gets a$
  \For {$i = 0 \ldots (l-1)$}
    \State $(r, u, v) = s_{i+1}$
    \Comment{The next transition.}
    \State $(c_1', \ldots, c_m') = \gamma_{i+1}$
    \Comment{The target configurations.}
    \State \textbf{find} $S_x : u \in S_x$
    \State \textbf{find} $S_y : v \in S_y$
    \State $a^i_z \gets a^i / S_z, \forall z = 1 \ldots m$
    \State $(P^i_x, b^i_x) \gets S_x.$\Call{ResolveExit}{$a^i_x, r, u, c_x'$}
    \Comment{Rearrange $S_x$ to let robot $r$ exit.}
    \State $(P^i_y, b^i_y) \gets S_y.$\Call{ResolveEnter}{$a^i_y, r, v, c_y'$}
    \Comment{Rearrange $S_y$ to let robot $r$ enter.}
    \State $P \gets P . (P^i_x || P^i_y)$
    \State $b^i = a^i_1 \otimes \ldots \otimes b^i_x \otimes \ldots \otimes b^i_y
   	\otimes \ldots \otimes a^i_m$ 
    \State $a^{i+1} \gets s_{i+1}(b^i)$
    \State $P \gets P .  . \seq{s_{i+1}}$
    \Comment{Add the transition.}
  \EndFor
  \For {$z = 1 \ldots m$}
    \State $T_z \gets S_z.$\Call{ResolveTerminate}{$a^l / S_z, b / S_z$}
    \Comment{Rearrange $S_z$ into its final arrangement.}
  \EndFor
  \State $P \gets P . (T_1 || \ldots || T_m)$
  \Ret $P$
\EndFunction
\end{algorithmic}
\end{algorithm} 

\section{Subgraph Planning}

We can now construct a planning algorithm which searches the space of abstract 
plans (Algorithm \ref{alg:subgraph}). The procedure is much the same as before. 
First we compute the configuration tuple for the initial arrangement. Then we 
extend the plan one step at a time. Each step consists of selecting a robot $r$ 
and moving it from the subgraph it currently occupies $S_x$ to a neighbouring 
subgraph $S_y$ in the reduced graph $X$, along a connecting edge $(u,v)$. 

This transition is only possible if the plan-step $(s, (u,v))$ is applicable. If
it is, it may result in a number of different configurations in the subgraph entered. We need to choose one to create the configuration tuple for the next step. Both the applicability test and the selection of the subsequent configurations are
performed in lines 10-11 of \textsc{AbstractPlan}.

The abstract plan is extended step by step in this fashion until it reaches a
configuration tuple which matches the goal arrangement. The resulting abstract
plan is then resolved into a concrete plan. For each transition in the abstract
plan we build two short concrete plans -- one to move the robot to the outgoing
vertex of the transition, and one to make sure the incoming vertex is clear and the subgraph is appropriately arranged to create the subsequent configuration.
Since these two plans are on separate subgraphs, they can be combined in
parallel. The final step is to rearrange the robots into the goal arrangement.
Again, this can be done in parallel on each of the subgraphs.

\textsc{AbstractPlan} has been written as a non-deterministic program,
including choice-points. A search algorithm such as breadth-first or depth-first
search is needed to examine each possible set of choices in some ordered
fashion. If this search is complete then an abstract plan is guaranteed to be
found, if one exists and so by the theorem above this planning algorithm is both
sound and complete. Note that the resolution phase of the planner is entirely
deterministic, so no further search is needed once an abstract plan is found.

\subsection{Subgraph Methods}

The efficiency of this algorithm relies on being able to compute several functions
without a lot of search:
\begin{itemize}
  \item{\textsc{Exit}} To compute $c \ominus (r, u)$, testing if it is possible
  for a robot to exit the subgraph and determining the resulting configuration(s).
  \item{\textsc{Enter}} To compute $c \oplus (r, v)$, testing if it is possible
  for a robot to enter the subgraph and determining the resulting configuration(s). 
  \item{\textsc{Terminate}} To compute $b / S \in c$, testing if it is possible
  for the robots in the subgraph to move to their terminating positions.
  \item{\textsc{ResolveExit}} To build a plan rearranging robots in a subgraph to
  allow one to exit.
  \item{\textsc{ResolveEnter}} To build a plan rearranging robots in a subgraph to
  allow one to enter.
  \item{\textsc{ResolveTerminate}} To build a plan rearranging robots in a subgraph
  into their terminating positions.
\end{itemize}

The key to efficient subgraph planning is to carefully constrain the allowed 
structure of the subgraphs in our partition, so these functions are simple to 
implement and do not require expensive search. The advantage of this approach 
is that each of these functions can always be computed based only on the 
arrangement of other robots within that particular subgraph, not relying on 
the positions of robots elsewhere.

\section{Subgraph Structures}
\begin{figure}[tb]
\begin{center}
\subfigure[A stack]{\label{fig:stack}\includegraphics[]{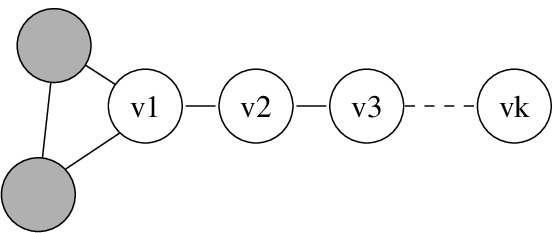}} 
\subfigure[A hall]{\label{fig:hall}\includegraphics[]{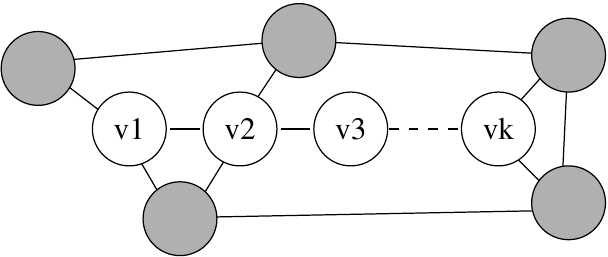}} 
\subfigure[A clique]{\label{fig:clique}\includegraphics[]{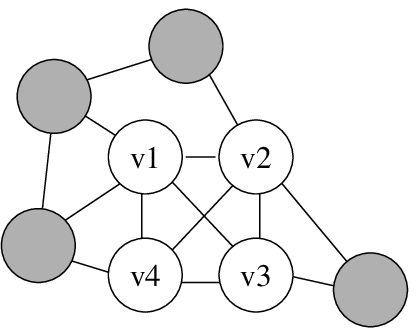}}\quad
\subfigure[A ring]{\label{fig:ring}\includegraphics[]{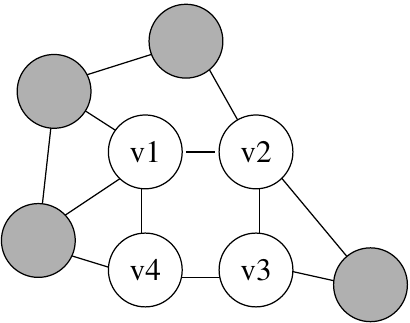}}
\caption{Examples of the four different subgraph structures.}
\label{fig:subgraphs}
\end{center}
\end{figure} 
The key to this process is therefore in the selection of subgraph types. These
abstractions need to be chosen such that:
\begin{enumerate}
  \item They are commonly occurring in real road-maps,
  \item They are easy to detect and extract from a road-map,
  \item They abstract a large portion of the search space,
  \item Computing the legality of transitions is fast, sound and complete,
  \item Resolving an abstract plan into a concrete sequence of movements is efficient.
\end{enumerate}
In this paper we present four subgraph types: stacks, halls, cliques and rings,
which satisfy these requirements. In the following analysis, let $n$ be the the number of vertices in the subgraph and $k$ be the number of robots occupying the subgraph \emph{before} the action takes place.

\subsection{Stacks}
\label{sec:stacks}
A stack (Figure~\ref{fig:stack}) represents a narrow dead-end corridor in the road-map.
It has only one exit and it is too narrow for robots to pass one another, so robots must enter and leave in a last-in-first-out order. It is one of the simplest subgraphs and does not occur often in real maps, but it serves as an easy illustration of the subgraph
methods. Formally it consists of a chain of vertices, each linked only to its predecessor
and its successor. Only the vertex at one end of the chain, called the \emph{head}, is
connected to other subgraphs so all entrances and exits happen there.

A configuration of a stack corresponds to an ordering of the robots that reside in
it, from the head down. Robots in the stack cannot pass each other, and so the
ordering cannot be changed without the robots exiting and re-entering the stack.

\subsubsection{\Enter{}}

A robot can always enter the stack as long as the stack is not full. Only one
new configuration is created, adding the robot to the front of the ordering.
This computation can be done in $O(1)$ time.

\subsubsection{\Exit{}}

A robot can exit the stack only if it is the top robot in the ordering. Only one
new configuration is created, removing the robot from the ordering. This
computation can also be done in $O(1)$ time.

\subsubsection{\Terminate{}}

To determine whether termination is possible, we need to check if the order of robots in
the current configuration is the same as that in the terminating arrangement.
This operation takes $O(k)$ time.

\subsubsection{\ResolveEnter{}}

Rearranging robots inside the stack is simple since we know that the ordering
is constant. To vacate the top of the stack (the only possible entrance point)
we move robots deeper into the stack (as necessary). There is guaranteed to be
room, since entering a full stack is not permitted. At worst this takes $O(k)$
time.

\subsubsection{\ResolveExit{}}

When a robot exits the stack, the abstract planner has already determined that
it is the first robot in the stack with no others between it and the head
vertex. It can simply move up the stack to the head, and then out. No other
robots need to be moved. At worst this takes $O(n)$ time.

\subsubsection{\ResolveTerminate{}}

Finally, moving robots to their terminating positions can be done in a
top-to-bottom order. If a robot is below its terminating position it can move
upwards without interference. If a robot is above its terminating position,
other robots below may need to be moved lower in order to clear its path. This
approach is sound, since the terminating positions of these robots must be
further down the stack (or else the ordering would be different). This
process has an $O(nk)$ total worst-case running time.

\subsection{Halls}
\label{sec:halls}
A hall is a generalisation of a stack (Figure~\ref{fig:hall}). Like a stack, it is a narrow corridor which does not permit passing, but a hall may have multiple entrances and exits along its length. Formally it consists of a single chain of vertices, each one joined to its predecessor and its successor. There must be no other edges between vertices in the hall, but there may be edges connecting to other subgraphs from any node 
in the hall. Halls are much more commonly occurring structures,  but still maintain the same property as stacks: the robots cannot be reordered without exiting and re-entering. Thus, as with stacks, the configuration of a  hall corresponds to the order of the robots occupying it, from one end of the hall to the other.

\subsubsection{\Enter{}}
\begin{figure}[tb]
\begin{center}
\includegraphics[]{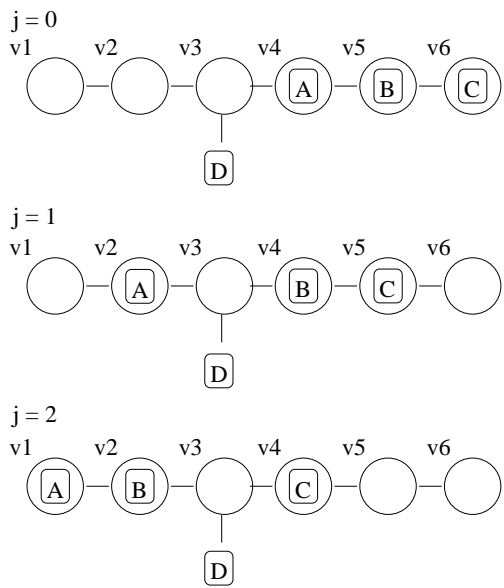}
\caption{Example of entering a hall subgraph, with $k = 3$, $n = 6$ and $i = 3$.
Robot D can enter at three possible sequence positions $j$ = 0, 1 or 2 but not
at $j = 3$.}
\label{fig:enter_hall}
\end{center}
\end{figure} 

A robot can enter a hall as long as it is not full. The configurations generated
by that entrance depend on three factors: 1) The size of the hall $n$, 2) The
number of robots already in the hall $k$, and 3) The index $i$ of the vertex at
which it enters (ranging from 1 to $n$).

Figure~\ref{fig:enter_hall} shows how entering a hall can result in several
different configurations. It is a matter of how the robots already in the hall
are arranged, to the left and right of the entrance, before the entering robot
moves in. If there is enough space in the hall on either side of the entrance
vertex, then the new robot can be inserted at any point in the ordering. But if
space is limited (as in the example) then it may not be possible to move all the
robots to one side or another, limiting the possible insertion points.

Given the three variables $k, n, i$ above, we can compute the maximum and
minimum insertion points as:
\begin{align*}
j &\leq min(i-1, k) \\
j &\geq max(0, k - (n - i))
\end{align*}

Creating a new configuration is then just a matter of inserting the new robot
into the ordering at the appropriate point. Since the list of robots needs to be
copied in order to do this, it takes $O(k)$ time for each new configuration.

\subsubsection{\Exit{}}

Whether a robot can exit a hall via a given edge again depends on several
factors: 1) the size of the hall $n$, 2) the number of robots in the hall $k$,
3) the index $i$ of the vertex from which it exits (from 1 to $k$), 3) the index $j$
of the robot in the ordering (from 1 to $k$). Exit is possible if:
\[
j \leq i \leq n - (k-j)
\]
If exit is possible there is one resulting configuration: the previous ordering
with the robot removed. This takes $O(k)$ time to compute.

\subsubsection{\Terminate{}}

Checking termination is the same for halls as with stacks, we just have to test
that the order of robots in the final arrangement matches the current
configuration. This can be done in $O(k)$ time for $k$ robots in the hall.

\subsubsection{\ResolveEnter{}}

To resolve an entrance to a hall we need to know which of the subsequent
configurations we are aiming to generate, so we know the proper insertion point 
for the entering robot. The robots before the insertion point are shuffled in
one direction so that they are on one side of the entry vertex, and the rest to
the other side. At worst this will take $O(nk)$ time.

\subsubsection{\ResolveExit{}}

Resolving an exit involves moving the robot up or down the hall to the exit
vertex, shuffling any other robots that are in the way. In the worst case, in
which all the robots shuffle from one end of the hall to the other, this takes
$O(nk)$ time.

\subsubsection{\ResolveTerminate{}}

\ResolveTerminate{} for a hall is identical to that for a stack, described 
above.

\subsection{Cliques}
\label{sec:cliques}

A clique (Figure~\ref{fig:clique}) represents a large open area in the map with many exit points (vertices) around its perimeter. Robots can cross directly from any vertex to another, and as long as the clique is not full, other robots inside can be shuffled out of the way to allow this to happen.

Formally a clique is a totally connected subgraph. Cliques have
quite different properties to halls and stacks. As long as there is at least one
empty vertex in a clique, it is possible to rearrange it arbitrarily. So a
configuration of a clique, in this circumstance, is just the set of robots it
contains. 

However there are a special set of configurations in which the clique is
\emph{locked}. This occurs when the number of robots in the clique equals the
number of vertices. Then it is impossible for the clique to be rearranged. A 
configuration of a locked clique has to explicitly record the position of each
robot. 

\subsubsection{\Enter{}}

A clique can always be entered so long as it is not full. If the clique has more
than one vacant vertex, then there is a single new configuration with the
entering robot added to the set of occupants. If the clique has only one space
remaining, then the entering robot locks the clique. In theory, at this point it
is necessary to make a new configuration for every possible arrangement of the
occupying robots (with the entering robot always in the vertex it enters). 

In practice, it is more efficient to create just a single ``locked'' configuration which records the locking robot and its vertex, and leaves the other positions unspecified. Any permutation of the other robots is possible, so the exact details of the configuration need not be pinned down until the next action (either \Exit{} or \Terminate{}) requires them to be. This is a form of \emph{least commitment}, and it can significantly reduce the branching factor of our search.

Performing this test and creating the new configuration takes $O(k)$ time for
$k$ robots in the clique.

\subsubsection{\Exit{}}

If the clique is unlocked then any robot can exit from any vertex and the new 
configuration is created by simply removing the robot from the set of occupants.

If the clique is locked then a robot can only exit from the specific vertex that
it occupies. The resulting configuration is unlocked and the exact locations of
the robots can be discarded.

In the least-commitment version, the locking robot is constrained to exit from
its vertex and every other robot can exit from any vertex \emph{except} the one
occupied by the locking robot. 

Performing this test and creating the new configuration takes $O(k)$ time for
$k$ robots in the clique.

\subsubsection{\Terminate{}}

With an unlocked configuration, checking for termination simply consists of
making sure that all (and only) the required occupants are in the clique. For a
locked configuration the robots must all be in their terminating positions (as
there is no possibility of rearranging them). In the least-commitment version
just the locking robot must be in its terminating vertex. We can then assume that
all the other robots are also in their places (thus committing to a choice of
configuration that we delayed earlier).

Performing this test takes $O(k)$ time for $k$ robots in the clique.

\subsubsection{\ResolveEnter{}}

If the entrance vertex is occupied when a robot wishes to enter then we can
simply move the occupant directly to another vacant vertex in the clique, since
every vertex is connected to every other.

If we are using least commitment and the entering robot locks the clique
then we need to look ahead in the plan to see the next action involving
this clique. If it is an exit transition then we need to move the exiting robot
to the exit vertex now (\emph{before} the clique is locked). If there is no
subsequent exit, meaning the robots will be terminating in this clique, then we
need to rearrange them into their terminating positions at this point.

If we amortise the cost of any rearrangements over the subsequent call to 
\ResolveExit{} or \ResolveTerminate{} we can treat this operation as taking
$O(1)$ time.

\subsubsection{\ResolveExit{}}

If the clique is full at the time of exit then we can assume that the exiting
robot is already at its exit vertex and nothing needs to be done. On the other
hand, if the clique is not full it may be that the robot is not at its exit
vertex. It must be moved there. If the exit vertex is already occupied by
another robot, it can be moved into another unoccupied vertex. Both these
movements can be done directly, as the clique is totally connected.
This operation takes $O(1)$ time.

\subsubsection{\ResolveTerminate{}}

If the clique is locked then we can assume that the robots have already been
appropriately arranged into their terminal positions and no further work needs
to be done. Otherwise the robots may need to be rearranged. A simple way to do
this is to proceed as follows: for each robot that is out of place, first vacate
its terminating position by moving any occupant to another unoccupied vertex,
then move the terminating robot into the vertex. Once a robot has been moved in
this way it will not have to move again, so this process is correct but it may
produce longer plans than necessary. The upside is that it takes only $O(n)$ time.

\subsection{Rings}
\label{sec:rings}

A ring (Figure~\ref{fig:ring}) resembles a hall with its ends connected. 
Formally, it is a subgraph $S$ with vertices $V(S) = \{v_1, \ldots, v_n\}$ and induced edges $E(S)$ satisfying:
\[
(v_i, v_j) \in E(S) \text{~iff~} | i - j | \equiv 1~(\text{mod~}n)
\]
As with a hall, ordering is important in a ring. Robots in the ring cannot pass
one another and so cannot re-order themselves. They can, however, rotate their
ordering (provided that the ring is not full). Thus in a ring of size 4 or more,
the sequence $\seq{r_1, r_2, r_3}$ is equivalent to $\seq{r_3, r_1, r_2}$ but
not to $\seq{r_2, r_1, r_3}$. Equivalent sequences represent the same configuration.

Like cliques, rings are \emph{locked} when they are full. A locked ring cannot
be rotated, so in a ring of size three the sequences $\seq{r_1, r_2, r_3}$
and $\seq{r_3, r_1, r_2}$ are \emph{not} equivalent. They represent two locked configurations with different properties.

\subsubsection{\Enter{}}

A robot may always enter a ring provided that it is not full. If there are $k$
robots already occupying the ring, then there are $k$ possible configurations
that can result (or one if $k$ is zero), one for each possible insertion point.

If the entering robot locks the ring then we must also record the specific
positions of each robot in the ring. This will still only produce $k$ different
configurations because the robots cannot be arbitrarily rearranged, unlike in
cliques. 

It is also possible to produce least-commitment versions of \Enter{} for rings
as with cliques. Again, this can significantly reduce the branching factor of
the search, but the details are more involved than we wish to enter into in this
paper.

This operation takes $O(k)$ time for each new configuration generated.

\subsubsection{\Exit{}}

When the ring is locked a robot can only exit from its recorded position, 
otherwise it can exit from any vertex. The robot is removed from the sequence to
produce the resultant configuration. The new configuration is unlocked and any
position information can be discarded. This can be done in $O(k)$ time for $k$ robots in the ring.

\subsubsection{\Terminate{}}

To check if termination is possible we need to see if the order of robots around
the ring in the terminal arrangement matches that of the current configuration.
If the configuration is not locked then rotations are allowed, otherwise the
match must be exact. This test can be done in $O(k)$ time for $k$ robots in the ring.

\subsubsection{\ResolveEnter{}}

When a robot is about to enter the ring, we need to first rearrange it so that the
the entry vertex is empty and the nearest robots on either side of that vertex
provide the correct insertion point for the subsequent configuration, as
selected in \Enter{} above. This may require shuffling the robots one way or
another, in much the same fashion as in a stack or hall. In the worst case this
will take $O(nk)$ operations for $k$ robots in a ring of $n$ vertices.

\subsubsection{\ResolveExit{}}

If a ring is locked then any robot exiting must already be at its exit position
so nothing needs to be done. Otherwise, in an unlocked ring, the robots may need
to be shuffled around the ring in order to move the robot to its exit. In the
worst case this will take $O(nk)$ operations for $k$ robots in a ring of $n$ vertices.

\subsubsection{\ResolveTerminate{}}

If a ring is locked then all the robots must already be in their terminating
positions; this is guaranteed by the abstract planner. Otherwise they will need
to be rotated into the correct positions. Once one robot has been moved to its
correct vertex, the rest of the ring can be treated as a stack and the
\ResolveTerminate{} method described above can be used, with $O(nk)$ worst case
running time for $k$ robots in a ring of $n$ vertices.

\subsection{Summary}

Of these four subgraphs halls and rings are the most powerful. Such subgraphs
are not only common in the structured maps of man-made environments, but can
also be found often in purely random graphs (consider: any shortest path in an
unweighted graph is a hall). Halls, rings and cliques of significant size can be found in many realistic planning problems.

Importantly, these structures are well constrained enough that the six procedures for planning outlined above can all be implemented efficiently and deterministically, 
without the need for any further search. In the cases of the clique and the ring, the
resolution methods we describe sometimes sacrifice path optimality for speed,
but this could be improved by using smarter resolution planners. Since the
resolution stage is only done once, this probably would not have a major effect
on the overall running time of the planner.

\section{Prioritised Planning}

A common solution to the rapid growth of search spaces in multi-robot planning is 
\emph{prioritised planning} \cite{erdmann86,vandenberg05}. In 
this approach we give the robots a fixed priority ordering before we begin. 
Planning is performed in priority ordering: first a plan is built for just the 
robot with highest priority; then a plan for the second highest, such that it
does not interfere with the first; then the third, and so on. Each new plan must 
be constructed so that it does not interfere with the plans before it. An 
example implementation is shown in Algorithm~\ref{alg:naive_priority}. Usually 
there is no backtracking once a plan has been made. This is signified in the 
algorithm by the \textbf{cut} operator in line 8 of \textsc{Plan}.

\begin{algorithm}[bt]
\caption{A simple prioritised planning algorithm.}
\label{alg:naive_priority}
\small
\begin{algorithmic}[1]
\Function{Plan}{$G, a, b$}
  \State $a'[v] \gets \Box, \forall v \in G$
  \Comment{$a'$ is the initial arrangement of robots $r_1 \ldots r_i$.}
  \State $b'[v] \gets \Box, \forall v \in G$  
  \Comment{$b'$ is the final arrangement of robots $r_1 \ldots r_i$.}
  \For{$i = 1 \ldots k$}
     \State $a'[v] = r_i, \mathrm{~for~}v : a[v] = r_i$
     \State $b'[v] = r_i, \mathrm{~for~}v : b[v] = r_i$
     \State $(P, P_i) \gets$\Call{PlanOne}{$G, r_i, \seq{P_1, \ldots, P_{i-1}}, \seq{0, \ldots, 0}, a', b'$}
     \Comment{Build a plan for $r_1 \ldots r_i$.}
     \State \textbf{cut}
     \Comment{Do not backtrack once a plan is found}
  \EndFor  
  \Ret $P$
\EndFunction
\end{algorithmic}
\begin{algorithmic}[1]
\Statex
\Function{PlanOne}{$G, r_i, \seq{P_1, \ldots, P_{i-1}}, \seq{t_1, \ldots, t_{i-1}}, a,
b$}
  \If {$a = b$}
    \Ret{(\seq{}, \seq{})}
    \Comment{Done.}
  \EndIf
  \Choose{$r_j \in R : j \leq i$}
  \Comment{Choose a robot to move.}
  \If{$j = i$}
    \State \textbf{select} $v_f : a[v_f] = r_i$
    \Choose{$v_t \in \setof{v}{(v_f, v) \in G}$ }  
    \Comment{Choose a new action for $r_i$}
  \Else
    \State $(r, v_f, v_t) \gets P_j[t_j]$
    \Comment{Select the old action for $r_j$ from $P_j$}
    \State $t_j \gets t_j + 1$
  \EndIf
  \If{$a[v_t] \neq \Box$}
    \Fail
    \Comment{Backtrack if the destination is occupied.}
  \EndIf
  \State $a[v_f] \gets \Box$
  \Comment{Move the robot.}
  \State $a[v_t] \gets r$
  \State $(P, P_i) \gets $\Call{PlanOne}{$G, r_i, \seq{P_1, \ldots, P_{i-1}},  \seq{t_1, \ldots, t_{i-1}}, a, b$} 
  \Comment{Recurse.}
  \State $P \gets (r_j, v_f, v_t). P$
  \Comment{Add this step to the global plan.}
  \If{$j = i$}
    \State $P_i \gets (r_i, v_f, v_t). P_{r_i}$
    \Comment{Add this step to $r_i$'s plan.}
  \EndIf
  \Ret $(P, P_i)$
\EndFunction
\end{algorithmic}
\end{algorithm} 

\begin{figure}[b!]
\begin{center}
\includegraphics[scale=0.5]{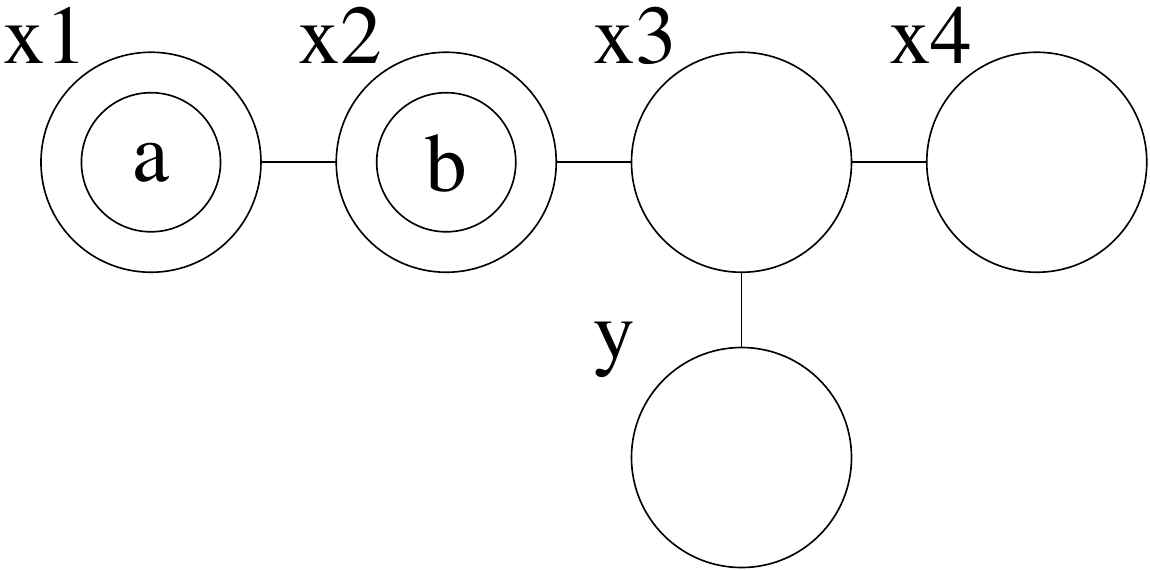}
\end{center}
\caption{A simple planning problem that cannot be solved with naive prioritised
planning. The goal is to swap the positions of robots $a$ and $b$.}
\label{fig:insoluble1}
\end{figure}

Because of this cut, the search is no longer complete. There are problems with
solutions that a prioritised planner cannot find. Figure~\ref{fig:insoluble1} is
an example. Robots $a$ and $b$ wish to change positions. To plan for either
robot on its own is easy; the plan contains just one step. But to plan for both
robots together requires each of them to move out of its way, to the right hand
side of the map so that the other can pass. A prioritised planner which
committed to a one-step plan for either $a$ or $b$ cannot then construct a plan
for the other robot which does not interfere.

This incompleteness is not just a mistake, however. It is the core of what makes
prioritised planning more efficient. The search space has been pruned
significantly by eliminating certain plans from consideration. If there is still
a viable solution within this pruned space (and often there is) then it can be
found much more quickly. In the (hopefully few) cases where it fails, we can
always resort to a complete planner as a backup.

\subsection{Prioritised Subgraph Planning}

Prioritised planning is not strictly a competitor to subgraph planning. In 
fact, prioritised search and the subgraph representation are 
orthogonal ideas, and it is quite possible to use both together. As in 
Algorithm~\ref{alg:naive_priority}, a plan is constructed for each robot
consecutively, but rather than building an entire concrete plan, only the
abstract version is produced, in the fashion of Algorithm~\ref{alg:subgraph} earlier.
Only when compatible abstract plans have been produced for every robot, are they
resolved into a concrete plan. 

As well as adding the advantage of abstraction to prioritised planning, the 
subgraph representation also allows the planner to cover more of the space of possible plans. By delaying resolution until the end, we avoid commitment to concrete choices for a high priority robot which will hamper the planning of later robots.

To illustrate this, let's return to the example in Figure~\ref{fig:insoluble1}
above. If we partition this subgraph so that vertices $\{x_1, x_2, x_3, x_4\}$ are a
hall $X$, then the prioritised subgraph planner can solve the problem. The
abstract plan for the highest priority robot is empty; there is nothing for it
to do as it is already in its goal subgraph. Given this plan, the second highest
priority robot can plan to move from $X$ to $y$ and then back again. This plan
can produce the goal configuration required. Resolving this plan will move the
highest priority robot to $x_4$ and back again as needed, but this plan will be
built by the \textsc{Resolve} methods for halls, and not by search.

\begin{figure}[b!]
\begin{center}
\includegraphics[scale=0.5]{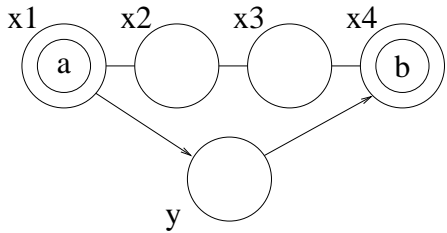}
\end{center}
\caption{A simple planning problem that can be solved with naive prioritised
planning but not with the subgraph abstraction. The goal is to swap the positions of robots $a$ and $b$. With priority ordering $a,b$ the subgraph planner will choose for robot $a$ to remain inside the hall. Robot $b$ is then trapped, because $a$ blocks the only exit to $y$ (note that the edges $(x_1, y)$ and $(y, x_4)$ are directed).}
\label{fig:insoluble2}
\end{figure}

Of course there is no such thing as a free lunch and this example only works if
we choose the right partition. If instead we treat $\{x_1, x_2\}$ as a stack and
$\{x_4, x_3, y\}$ as a separate hall then the prioritised subgraph planner will
not help us. Furthermore there exist problems, such as the one in Figure~\ref{fig:insoluble2} which can be solved by standard prioritised planners but will fail if
we introduce the wrong subgraph abstraction. It is difficult to generate more realistic cases of this problem with small numbers of robots, but as we will see in Section~\ref{sec:exp3} below, they can occur when the number of robots is large. 

\section{Search Complexity}
\label{sec:search}

Let us consider more carefully where the advantages (if any) of the subgraph decomposition lie. Subgraph transitions act as \emph{macro-operators} between one abstract state (set of configurations) to another. There is a long history of planners using macros of one kind or another, and their advantages and disadvantages are well known (see Section\ref{sec:related}). It is widely recognised that macros are advantageous when they reduce the depth of the search, but become a disadvantage when too many macros are created and the branching factor of the search becomes too large. These guidelines also apply to the use of subgraphs. 

A typical search algorithm proceeds as follows: select a plan from the frontier of incomplete plans and create all expansions. Add all the expansions to the frontier and recurse until a complete plan is found. The time taken to complete this search is determined by the number of nodes in the search tree, which is in turn determined by three factors:
\begin{enumerate}
\item $d$, the depth of the goal state, 
\item $b$, the average branching factor of the tree, i.e.~the number of nodes generated per node expanded
\item The efficiency of the search.
\end{enumerate}
A perfect search algorithm, which heads directly to the goal, will nevertheless contain $O(bd)$ nodes as the alternative nodes must still be generated, even if they are never followed. An uninformed breadth-first search, on the other hand, will generate $O(b^d)$ nodes. This can be regarded as a sensible upper bound on the efficiency of the search (although it is possible to do worse).

Macro-operators tend to decrease $d$ at the expense of increasing $b$, so do very well in uninformed search when $d$ dominates, but show less advantage when a good heuristic exists, where $b$ and $d$ are equally important. So it becomes important to consider how to keep the increases in branching factor to a minimum. In the case of subgraph planning, there are two main reasons why $b$ increases:
\begin{enumerate}
\item The reduced graph may have a larger average degree than the original. Since a subgraph contains many vertices, it tends to have more out-going edges than a single vertex. If all these edges connect to different subgraphs, then the branching factor will be significantly larger.  Sparse subgraphs (such as halls) are worse in this regard than dense subgraphs (such as cliques). The subgraph decomposition needs to be chosen carefully to avoid this problem.

\item A single subgraph transition may create a large number of possible configurations, such as when a robot enters a large hall which is already occupied by several robots. In some cases it may not strictly matter which configuration is generated and where possible we use least commitment to avoid creating unnecessary alternatives, but if there is the possibility that different configurations will result in different outcomes further down the track, then they all need to be considered. Halls in particular have this problem.
\end{enumerate}
 
As we will see in the experiments that follow, careful choice of the subgraph decomposition is important to avoid these pitfalls, but with an appropriate partition the abstraction can significantly improve both informed and uninformed search.
 
\section{Experiments}
\label{sec:experiments}

To empirically test the advantages of the subgraph approach, we ran several experiments on both real and randomly generated problems. Our first experiment demonstrates how the algorithms scale with changes to the size of the problem, in terms of the number of vertices, edges and robots, under a standard breadth-first search. The second experiment shows how these results are affected by using an heuristic to guide search. Both of these experiments use randomly generated graphs. The final experiment demonstrates the algorithm on a realistic problem.

In the first two experiments, maps were generated randomly and automatically partitioned into subgraphs. Random generation was done as follows: first a spanning tree was generated by adding vertices one by one, connecting each to a randomly selected vertex in the graph. If further edges were required they were generated by randomly selecting two non-adjacent vertices and creating an edge between them. All edges were undirected.\footnote{It should be noted that this algorithm does not generate a uniform distribution over all connected graphs of a given size, but it is very difficult to generate sparse connected graphs with a uniform distribution. The bias is not deemed significant.}

Automated partitioning worked as follows:
\begin{enumerate}
\item Initially mark all vertices as unused.
\item Select a pair of adjacent unused vertices.
\item Use this pair as the basis for growing a hall, a ring and a clique:
\begin{description}
\item{\textbf{Hall:}} Randomly add unused vertices adjacent to either end of the hall, provided they do not violate the hall property. Continue until no further growth is possible.
\item{\textbf{Ring:}} Randomly add unused vertices adjacent to either end of the ring until a loop is created. Discard any vertices not involved in the loop. 
\item{\textbf{Clique:}} Randomly add unused vertices adjacent to every vertex in the clique. Continue until no further growth is possible. 
\end{description}
\item Keep the biggest of the three generated subgraphs. Mark all its vertices as used.
\item Go back to step 2, until no adjacent unused pairs can be found.
\item All remaining unused vertices are singletons.
\end{enumerate}
This is not intended to be an ideal algorithm. Its results are far from optimal but it is fast and effective. Experience suggests that a partition generated by this approach can contain about twice as many subgraphs as one crafted by hand, and it makes no effort to minimise the degree of the reduced graph, but even with these randomly generated partitions the advantages of the subgraph abstraction are apparent. 

\begin{figure}[h]
\begin{center}
\subfigure{\includegraphics[scale=0.6]{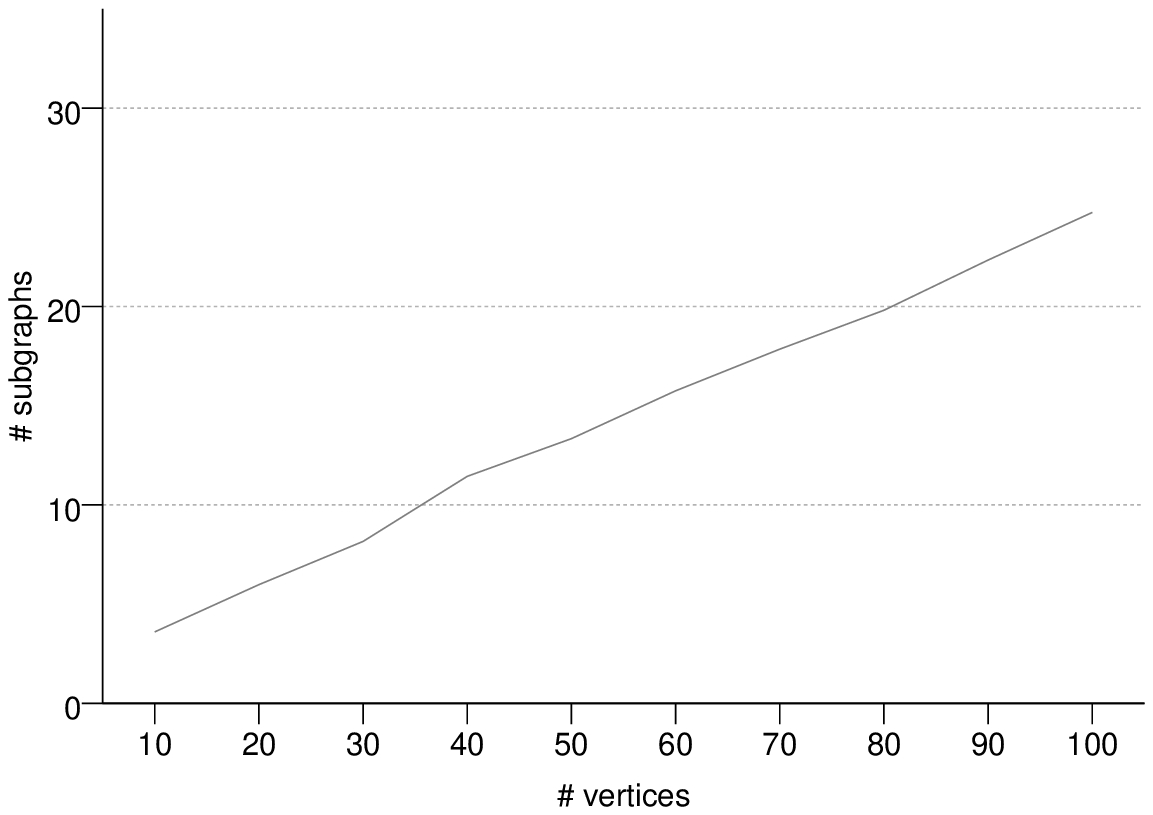}}
\subfigure{\includegraphics[scale=0.6]{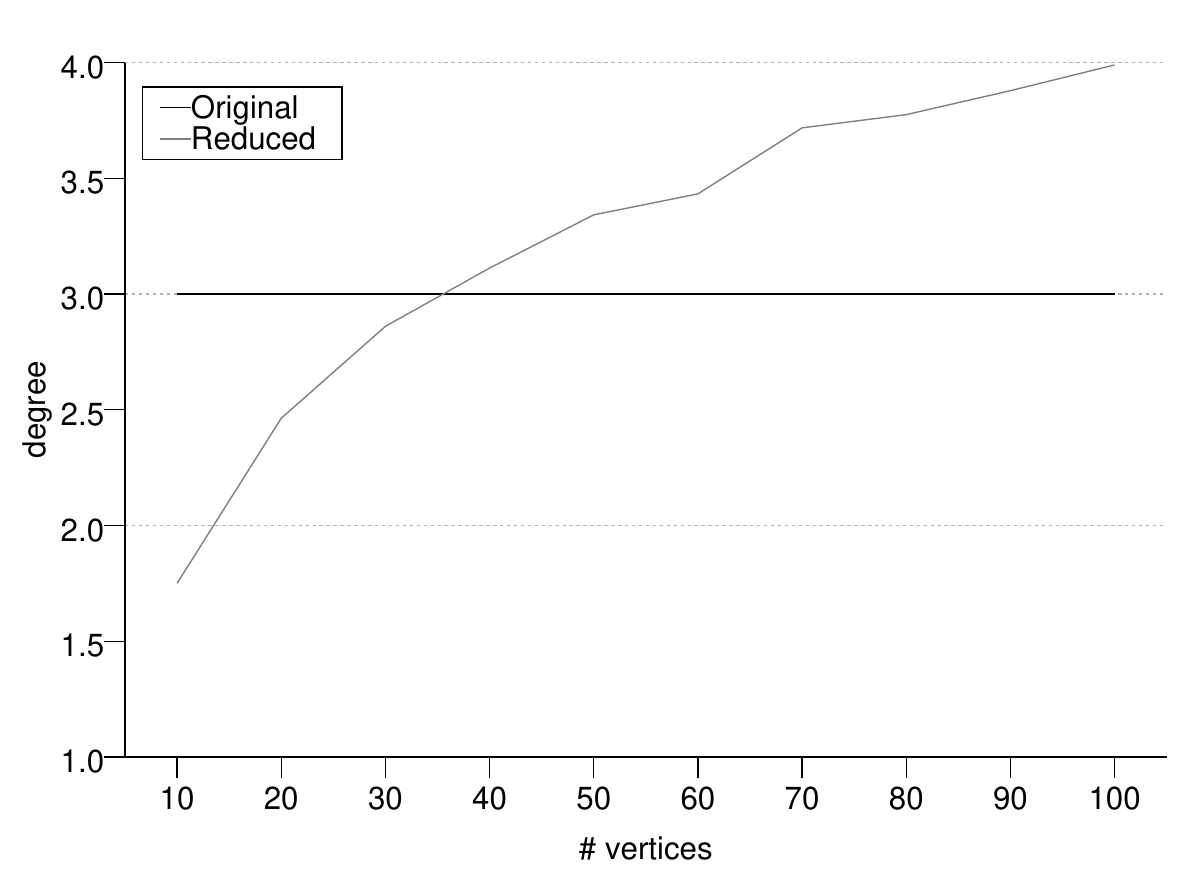}}
\end{center}
\caption{The results of the automatic partitioning program in Experiment 1a. The left graph shows the average number of subgraphs generated and the right graph shows the average degree of the reduced graph.} 
\label{fig:exp1subgraphs}
\end{figure}

\subsection{Experiment 1: Scaling Problem Size}
\label{sec:exp1}

\subsubsection{Scaling \ensuremath{|V|}}
\label{sec:exp1a}
In the first experiment we investigate the effect that scaling the number of vertices in the graph has on search time. Random graphs were generated with the number of vertices ranging from 10 to 100. Edges we added so that the average degree $d = |E| / |V|$ was always equal to 3. (This value seems typical for the realistic maps.) One hundred graphs were generated of each size, and each one was partitioned using the method described above. 

Figure~\ref{fig:exp1subgraphs} shows the performance of the auto-partitioning. As we can see, the number of subgraphs increased roughly linearly with the size of the graph, with an average subgraph size of 4. For small graphs (with fewer than 40 vertices) the reduced graph after partitioning is sparser than the original, but as the size increases the average degree of the reduced graph gets larger. These results are presented for informative purposes only. We make no claims about the quality of this partitioning algorithm, other than that it is indeed reducing the size of the graph, if only by a small factor.

\begin{figure}[]
\begin{center}
\subfigure[run times]{\includegraphics[]{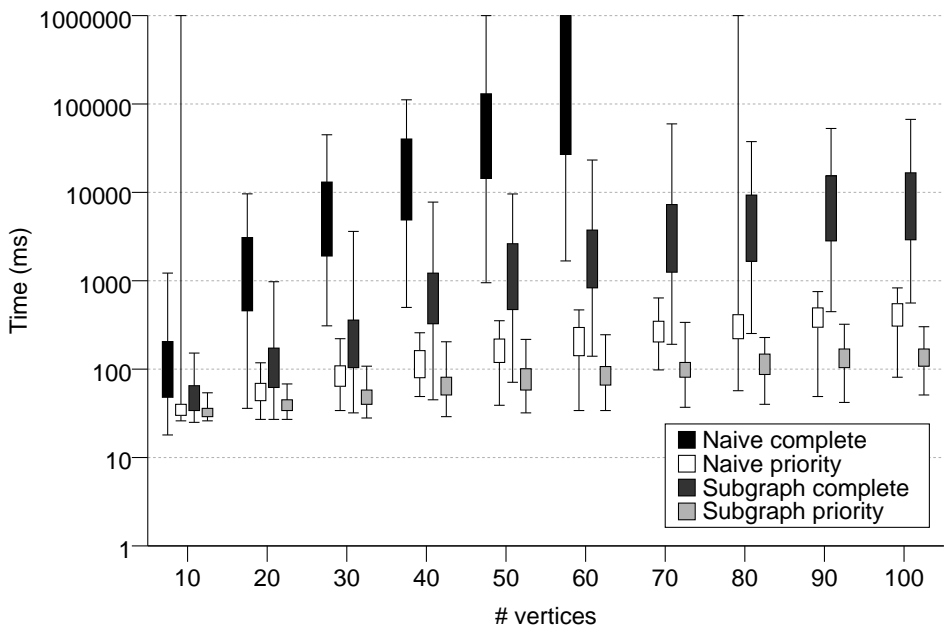}}
\subfigure[branching factor]{\includegraphics[scale=0.6]{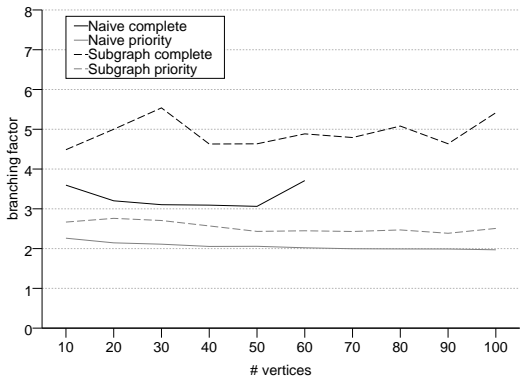}}
\subfigure[goal depth]{\includegraphics[scale=0.6]{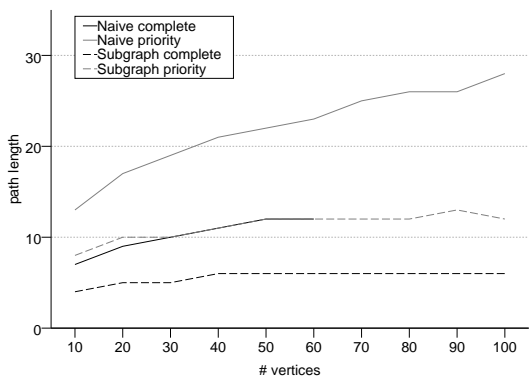}}
\end{center}
\caption{The results of Experiment 1a.  In graph (a) the boxes show the first and third quartile and whiskers to show the complete range. When an experiment failed to complete due to time or memory limits or incompleteness of the search, the run time was treated as infinite. No value is plotted for cases where more than 50\% of experiments failed. In graph (c) the goal depth for the naive complete and subgraph priority approaches are identical for graphs of 30 to 60 vertices, so the lines overlap. The naive complete planner could not solve problems with more than 60 vertices.} 
\label{fig:exp1}
\end{figure}
 
In each graph, three robots were given randomly selected initial and final locations, and a plan was generated. Figure~\ref{fig:exp1}(a) shows the average run times for each of the four approaches.\footnote{It has been noted that these times are overall rather slow. We acknowledge this and attribute it to our implementation, which is in Java and which was not heavily optimised to avoid garbage collection. We are currently working on an implementation with an optimised search engine, but we believe that these results still provide a valuable comparison between methods.} It shows a clear performance hierarchy. The complete planners are significantly slower than the priority planners, and in both cases the subgraph abstraction shows a significant improvement over the naive alternative. Nevertheless, in every case the combinatorial growth in runtime is apparent (note that the graph is plotted on a log scale). The linear relationship between number of vertices and number of subgraphs prevents the subgraph approaches from doing better than this. A better partitioning algorithm should ameliorate this problem.

To analyse the causes of this variation in run times, we need to consider the search process more carefully. We can measure the search depth $d$ and average branching factor $b$ for each experiment. The results are plotted in Figure~\ref{fig:exp1}(b) and (c). As we expected, when the subgraph abstraction is used, the goal depth is decreased and grows more slowly, but the branching factor is increased. Since we are doing uninformed search, $d$ dominates and the overall result is an improvement in planning time.

\begin{table}
\begin{center}
\caption{The number of planning failures recorded by the two prioritised planning approaches in Experiment 1a.}
\label{tab:exp1failures}
\begin{tabular}{|c|c|c|}
\hline
& \multicolumn{2}{c|}{\# Failures}  \\
\cline{2-3}
Vertices & Naive & Subgraph \\
\hline
10 & 2 & 0 \\
20 - 70 & 0 & 0 \\
80 & 1 & 0 \\
90 - 100 & 0 & 0 \\
\hline 
\end{tabular}
\end{center}
\end{table}
The incompleteness of prioritised planning shows in Table~\ref{tab:exp1failures}. On three occasions the naive prioritised search failed to find available solutions. However this was not a problem for the prioritised subgraph search.
 
\subsubsection{Scaling \ensuremath{|E|}}
\label{sec:exp1b}
Next we examine the effect of graph density. Fixing the number of vertices at 30, we generated random graphs with average degree ranging from $2.0$ to $4.0$. For each value, 100 graphs were randomly generated and automatically partitioned. Again the planning problem was to move three robots from between selected initial and goal locations. 

\begin{figure}[]
\begin{center}
\subfigure[run times]{\includegraphics[]{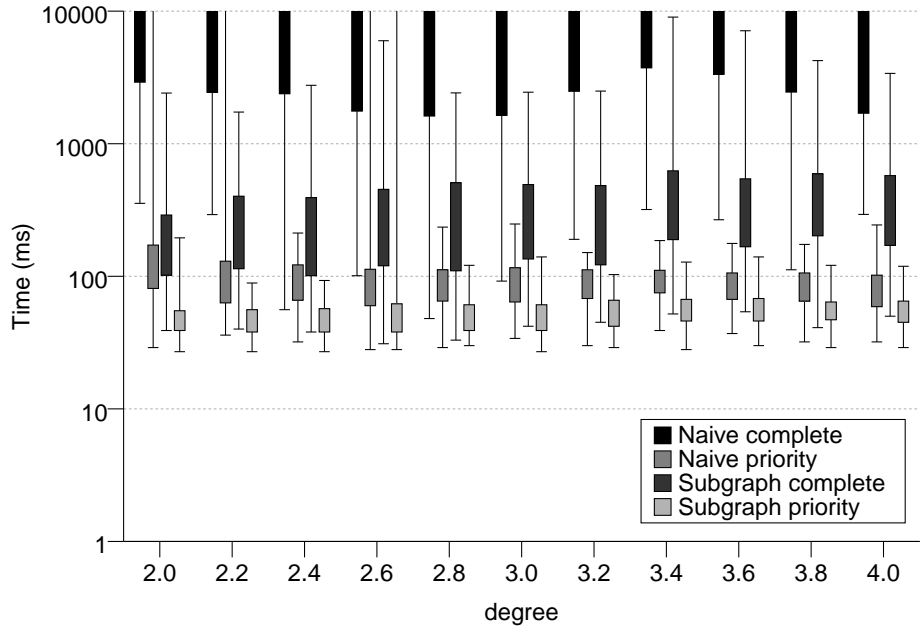}}
\subfigure[branching factor]{\includegraphics[scale=0.6]{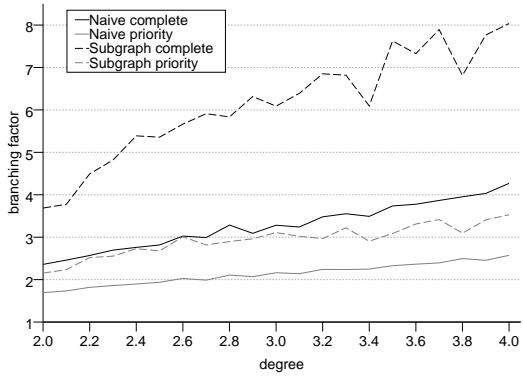}}
\subfigure[goal depth]{\includegraphics[scale=0.6]{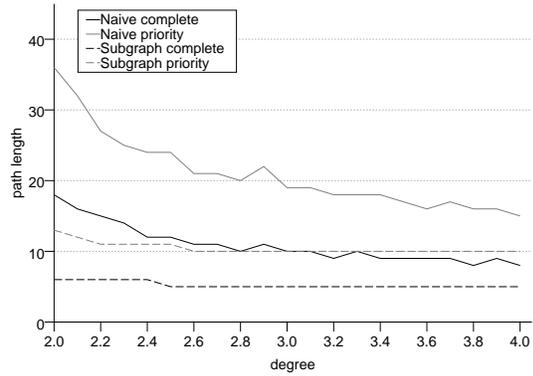}}
\end{center}
\caption{The results for Experiment 1b.} 
\label{fig:exp2}
\end{figure}

The results for this experiment are shown in Figure~\ref{fig:exp2}. There does not appear to be much overall change in the run times of any of the approaches, other than a small improvement from the naive prioritised planner as the graph gets denser. Figures~\ref{fig:exp2}(b) and (c) show the expected result: increasing the density of the graph increases the branching factor but decreases the depth. It appears to affect all four approaches similarly.

\begin{table}
\begin{center}
\caption{The number of planning failures recorded by the two prioritised planning approaches in Experiment 1b.}
\label{tab:exp2failures}
\begin{tabular}{|c|c|c|}
\hline
& \multicolumn{2}{c|}{\# Failures}  \\
\cline{2-3}
Degree & Naive & Subgraph \\
\hline
2.0 & 10 & 0 \\
2.2 & 8 & 0 \\
2.4 & 5 & 0 \\
2.6 & 1 & 1 \\
2.8 & 0 & 0 \\
3.0 & 2 & 0 \\
3.2 & 1 & 1 \\
3.4 - 4.0 & 0 & 0 \\
\hline 
\end{tabular}
\end{center}
\end{table}

An interesting difference, however, is shown in Table~\ref{tab:exp2failures}. This records the percentage of experiments for which each of the prioritised planners was unable to find a solution. For very sparse graphs, the naive planner failed on as many as 10\% of problems, but it improved quickly as density increased. With the subgraph abstraction added, the planner was able to solve all but two of the problems. In no case did we find problems which were solved by the naive planner and not by the subgraph planner. 

\subsubsection{Scaling \ensuremath{|R|}}
\label{sec:exp1c}

\begin{figure}
\begin{center}
\subfigure[run times]{\includegraphics[]{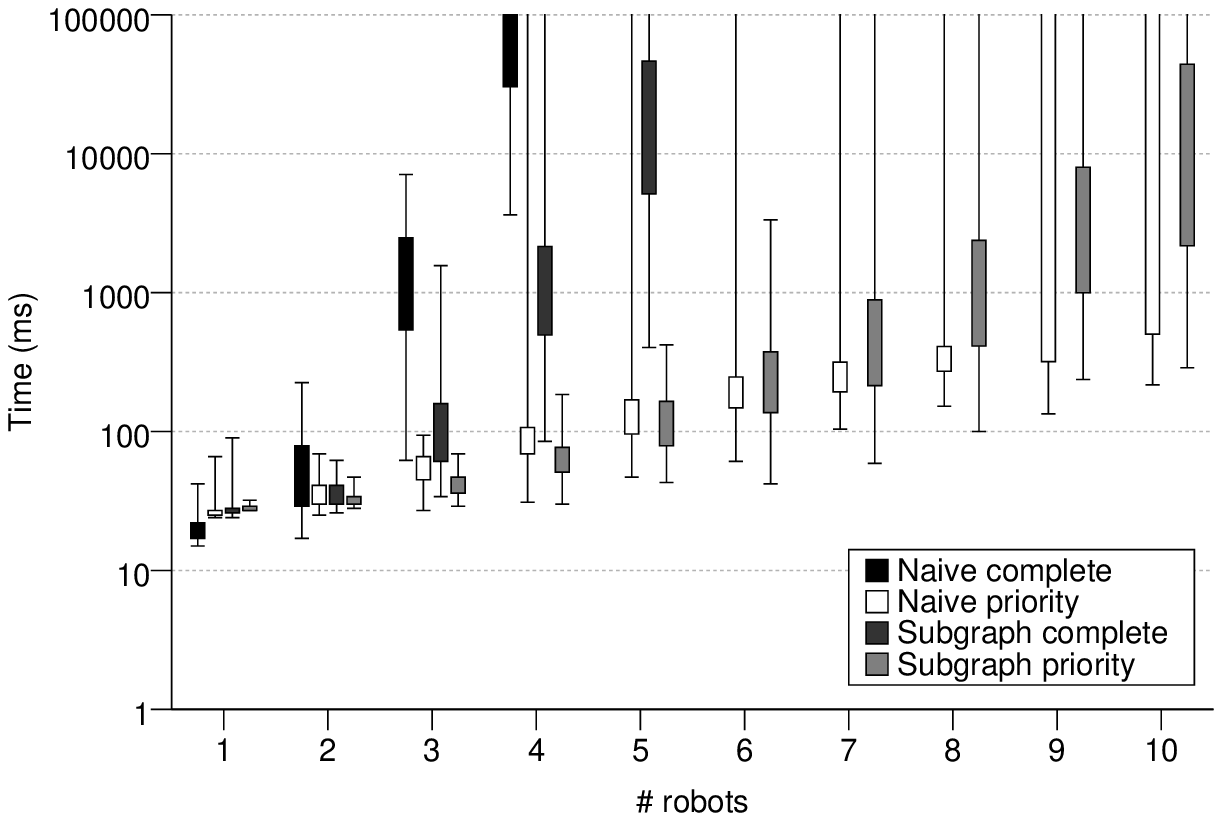}}
\subfigure[branching factor]{\includegraphics[scale=0.6]{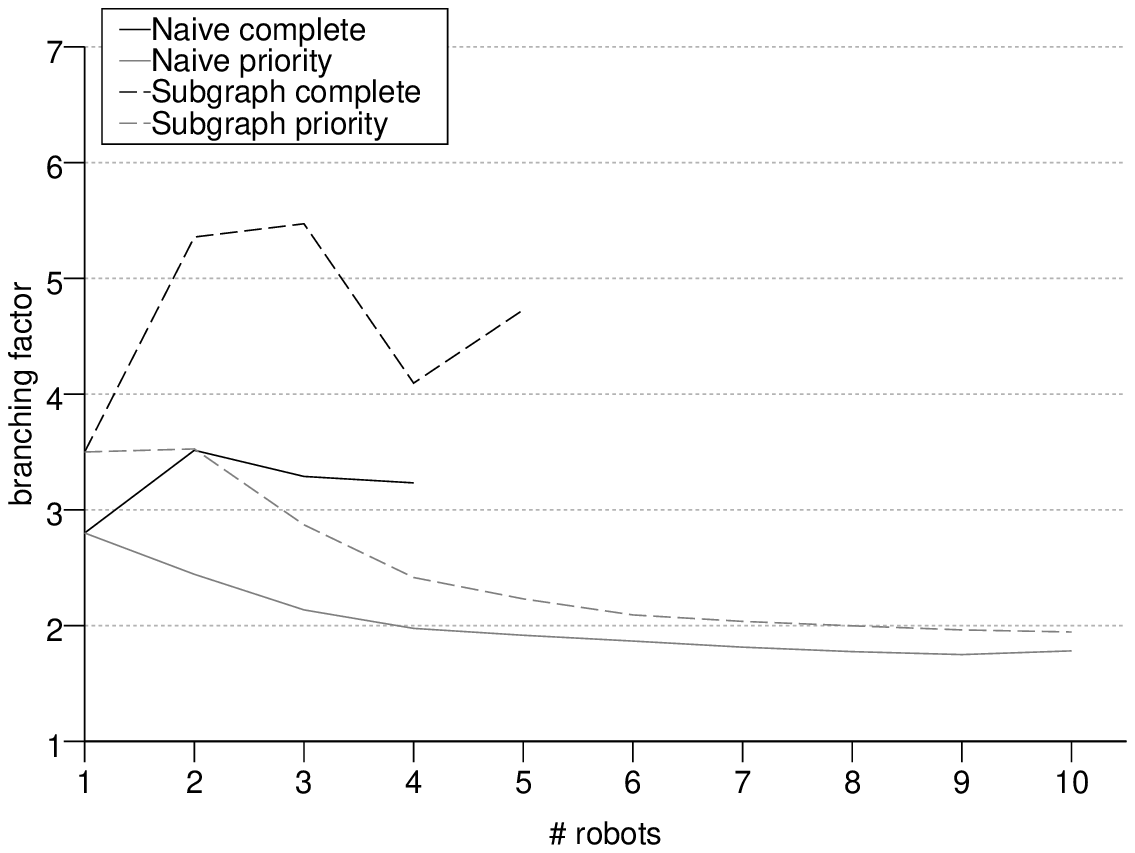}}
\subfigure[goal depth]{\includegraphics[scale=0.6]{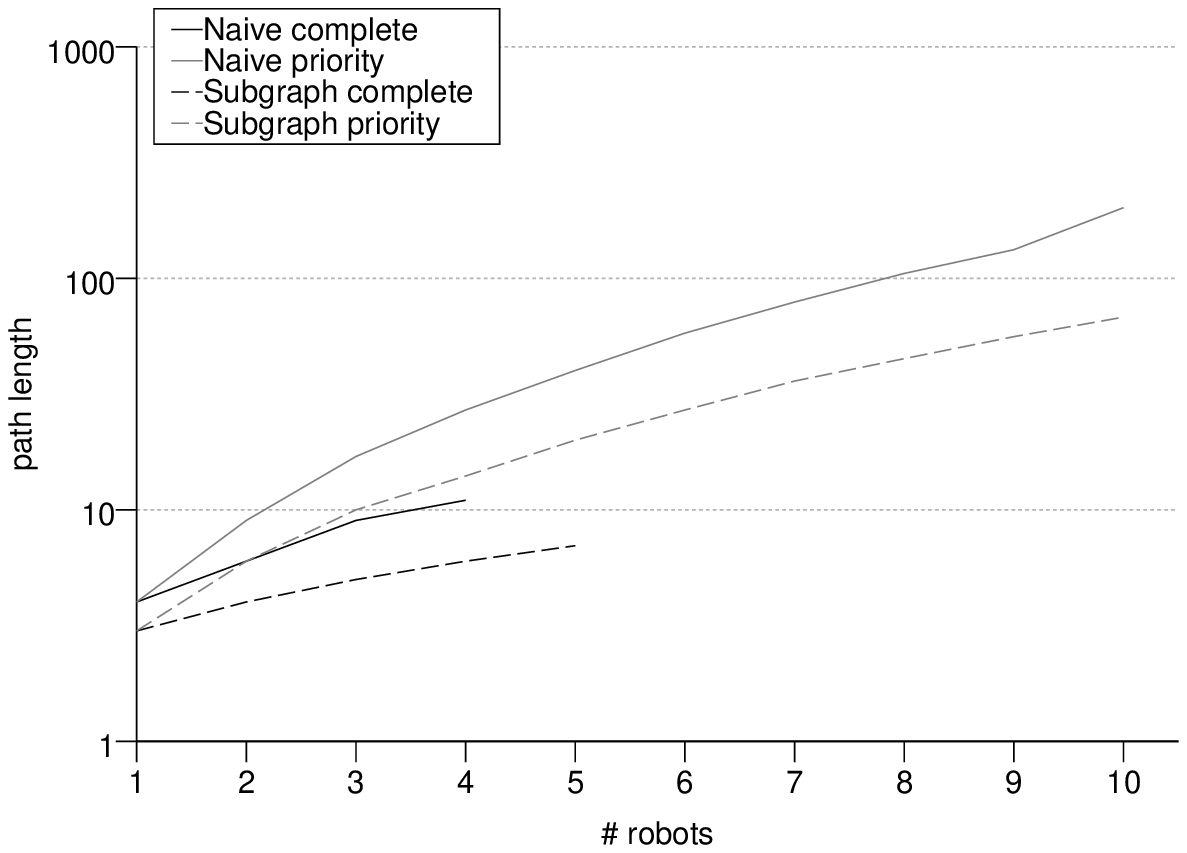}}
\end{center}
\caption{The results for Experiment 1c.} 
\label{fig:exp3}
\end{figure}


In the last of the scaling experiments, we investigate how each approach performs with varying numbers of robots. As before, 100 random graphs were generated and partitioned, each with 30 vertices and average degree of 3, and each one was partitioned using the automatic partitioning algorithm. Ten planning problems were set in each graph with the number of robots varying from 1 to 10. In each case initial and goal locations were selected randomly.

The running times for all four approaches are plotted in Figure~\ref{fig:exp3}(a). There is a major performance difference between the prioritised and non-prioritised planners, with the prioritised planners able to handle twice as many robots. Between the two complete-search approaches, the subgraph abstraction is an unnecessary overhead for very small problems, but shows significant advantage as the number of robots increases.

\begin{table}
\begin{center}
\caption{The number of planning failures recorded by the two prioritised planning approaches in Experiment 1c.}
\label{tab:exp3failures}
\begin{tabular}{|c|c|c|}
\hline
& \multicolumn{2}{c|}{\# Failures}  \\
\cline{2-3}
\# Robots & Naive & Subgraph \\
\hline
1 - 3 & 0 & 0 \\
4 & 3 & 0 \\
5 & 4 & 0 \\
6 & 10 & 0 \\
7 & 7 & 1 \\
8 & 7 & 1 \\
9 & 26 & 0 \\
10 & 46 & 1 \\
\hline 
\end{tabular}
\end{center}
\end{table}

There is less obvious advantage to the subgraph abstraction in the case of prioritised planning, until we look at the failure rates shown in Table~\ref{tab:exp3failures}. As the number of robots increases the incompleteness of the naive prioritised algorithm begins to become apparent, until with 10 robots we see that 46\% of the problems could not be solved by this planner. The advantage of the subgraph abstraction is now apparent: only a total of 3 problems could not be solved out of 1000 tried. 

Figures~\ref{fig:exp3}(b) and (c) plot the average branching factor and goal depth for these problems. As in previous experiments, the subgraph abstraction is seen to increase the branching factor but decrease the depth. In the complete search approaches the branching factor grows rapidly with the number of robots, as each node on the search path contains a choice of which robot to move. The prioritised approach reverses this trend, as planning is only ever done for one robot at a time, and the later robots are much more heavily constrained in the options available to them, providing fewer alternatives in the search tree.

\subsubsection{Discussion}

To summarise the above experiments, the advantages of the subgraph abstraction are two-fold. Firstly, it decreases the necessary search depth of a planning problem by compressing many robot movements into a single abstract step. Like other macro-based abstractions, it does this at the expense of increasing the branching factor but the gains seem to outweigh the losses in practice. Of course, this is dependent to some degree on the use of uninformed search, which we shall address below.

The other advantage is specific to the prioritised planner. For tightly constrained problems with sparse maps and/or many robots the incompleteness of the naive prioritised search becomes a very significant issue. With the addition of the subgraph abstraction the number of such failures is dramatically reduced, without additional search.

\subsection{Experiment 2: Heuristic Search}
\label{sec:exp2}

All the experiments so far have involved uninformed breadth-first search without the use of an heuristic. As such, the runtime of the algorithms is more strongly affected by changes in search depth than by the branching factor. As we explained above, uninformed search has an $O(b^d)$ expected running time. However a perfect heuristic can reduce this to $O(bd)$, making the branching factor a much more significant aspect. A perfect heuristic is, of course, unavailable, but it it possible to efficiently compute a reasonably good search heuristic for this task by relaxing the problem. Disregarding collisions we can simply compute the sum of the shortest path lengths from each robot's location to its goal. This is an underestimate of the actual path length, but is accurate for loosely constrained problems (with few robots and dense graphs).

\begin{figure}[b!]
\begin{center}
\subfigure[subgraphs]{\includegraphics[scale=0.6]{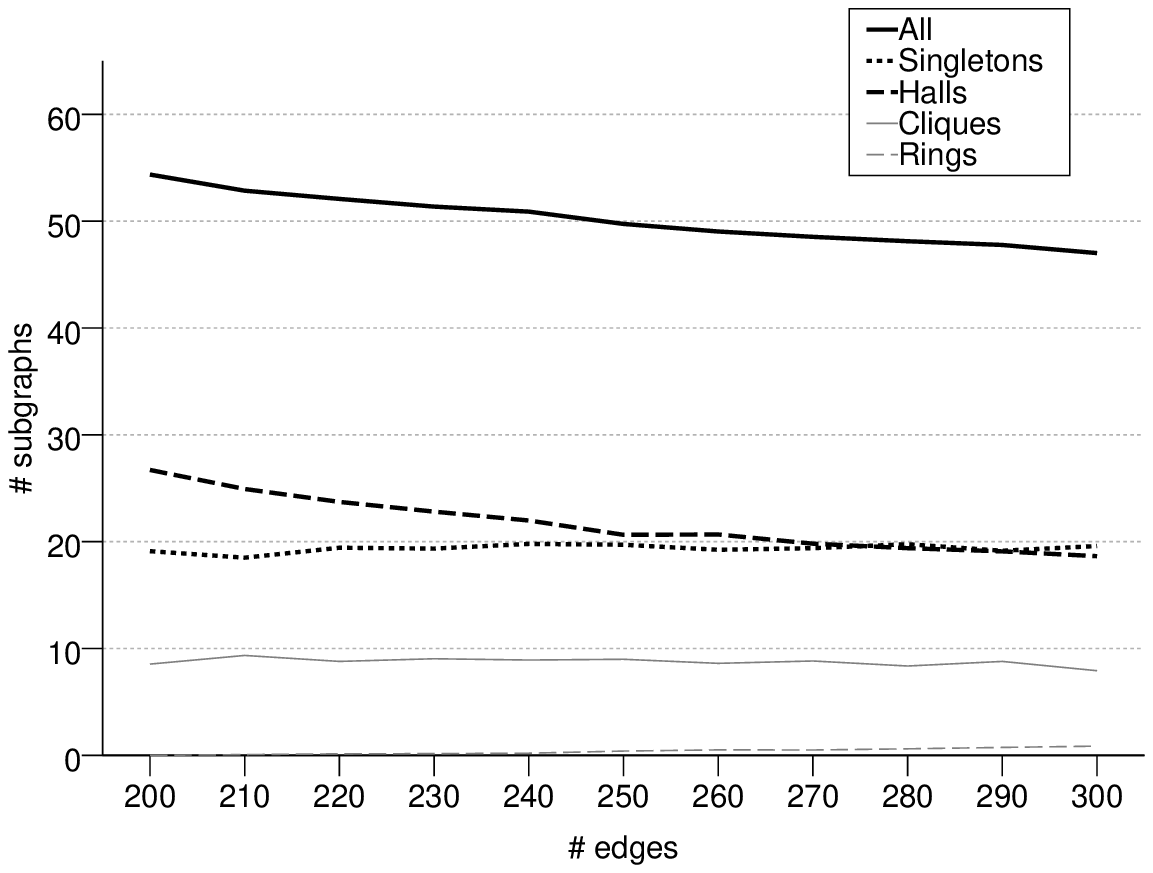}}
\subfigure[degree]{\includegraphics[scale=0.6]{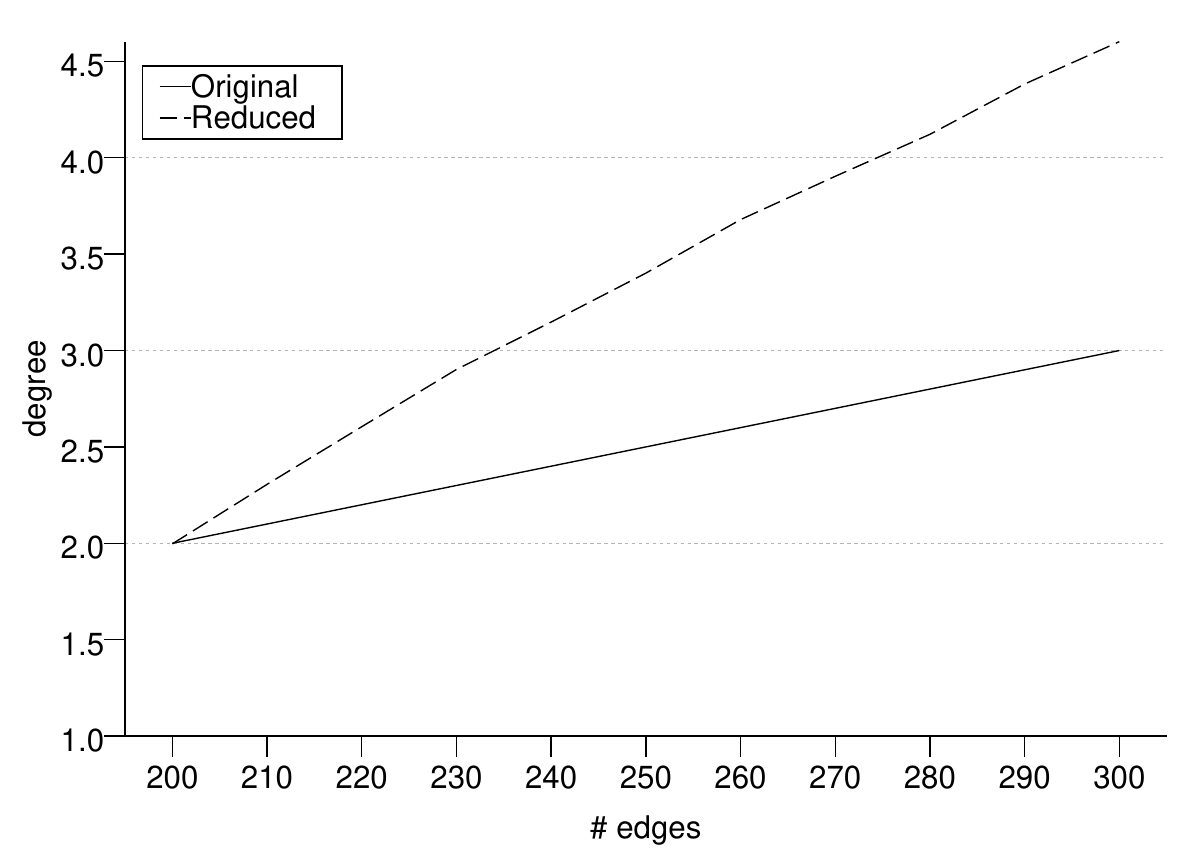}}
\end{center}
\caption{The results of the auto-partitioner on graphs in Experiment 2.} 
\label{fig:exp4partition}
\end{figure}

\begin{figure}
\begin{center}
\subfigure[run times]{\includegraphics[]{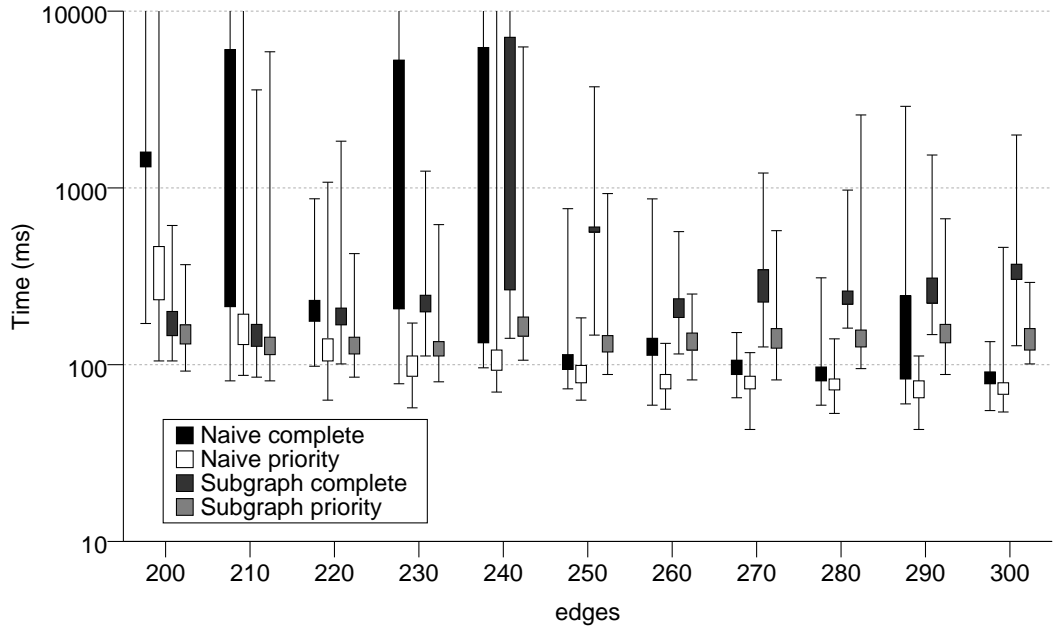}}
\subfigure[branching factor]{\includegraphics[scale=0.6]{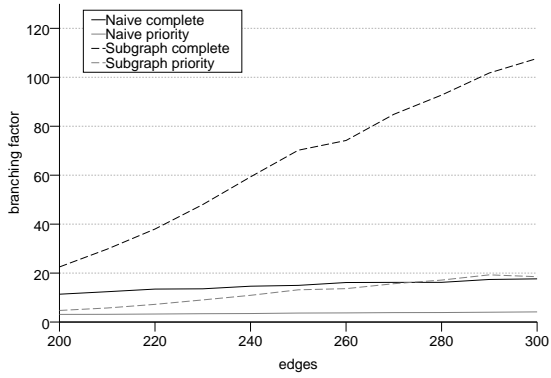}}
\subfigure[goal depth]{\includegraphics[scale=0.6]{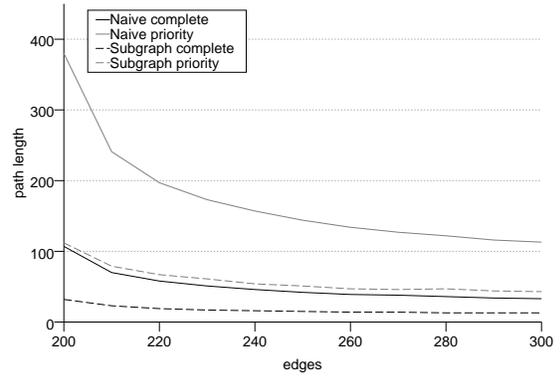}}
\end{center}
\caption{The results for Experiment 2.} 
\label{fig:exp4results}
\end{figure}

In this experiment we used a best-first search algorithm guided by this heuristic.\footnote{The A* algorithm was not used, as we have no desire to minimise the length of the solution, just to find a solution as quickly as possible.} At every node in the search tree, we selected the plan which minimised this value. In the case of the subgraph planner, the actual locations of robots at any time-point are not specified, just the subgraph they occupy, so the heuristic was calculated using the maximum distances from any vertex in each robot's subgraph to its goal. We pre-computed the shortest path distances between every pair of nodes before running the planner, so the time to do this computation is not counted in the runtime for the algorithm.

The utility of this heuristic depends largely on how constrained the problem is. If the graph is dense and there are relatively few robots, the heuristic should direct the planner quickly to the goal. However if the graph is sparser, then interactions between robots will become more important, and the heuristic will be less useful. For this reason, we concentrate our attention in this experiment on how varying the density of the graph affects the performance of our different approaches.

Random maps of 200 vertices were generated, with average degree ranging from 2 to 3. One hundred graphs were generated of each size and partitioned using the algorithm described earlier. Figure~\ref{fig:exp4partition} shows the results. As the original graph gets denser, the number of subgraphs decreases, mostly because it is possible to create longer halls. This is good, as fewer subgraphs mean shorter paths, but the consequential increase in degree will adversely affect the branching factor.

Ten robots were placed randomly in each graph and assigned random goal locations. All four planning approaches were applied to these problems. The resulting run-times are plotted in Figure~\ref{fig:exp4results}(a). The first thing that is apparent from this graph is that the distinction between the different approaches is greatly reduced. Both the size of the graph and the number of robots are much larger than in previous experiments, and this has had a corresponding effect on the goal depth and branching factor (Figure~\ref{fig:exp4results}(b) and (c)), but the run-times are much smaller, so clearly the heuristic is effective at guiding the search. On average the ratio of search nodes expanded to goal depth was very close to 1.0 in all experiments, with only a slight increase in the more constrained cases, so we can conclude that this heuristic is close to perfect. 

\begin{table}
\begin{center}
\caption{The number of planning failures recorded by the two prioritised planning approaches in Experiment 2.}
\label{tab:exp4failures}
\begin{tabular}{|c|c|c|}
\hline
& \multicolumn{2}{c|}{\# Failures}  \\
\cline{2-3}
\# Edges & Naive & Subgraph \\
\hline
200 & 14 & 0 \\
210 & 2 & 0 \\
220 & 0 & 0 \\
230 & 0 & 0 \\
240 & 1 & 0 \\
250 - 300 & 0 & 0 \\
\hline 
\end{tabular}
\end{center}
\end{table}


When we compare the four approaches we see three distinct stages. In the most constrained case, at 200 edges, we see both the subgraph approaches outperforming either naive approach, with a small benefit in prioritised search over complete search. At 220 edges the pattern has changed. The two prioritised methods are significantly better  than the two complete approaches. As the number of edges increases, both the naive methods continue to improve, while prioritised subgraph search holds steady and complete subgraph search gets significantly worse (due to its rapid increase in branching factor). At 300 edges both the naive approaches are doing significantly better than the subgraph approaches.

The cause is clearly seen in Figures~\ref{fig:exp4results}(b) and (c). The branching factors for the subgraph approaches increase significantly faster than for the naive approaches, and the corresponding improvement in goal depth is not sufficient to outweigh the cost.

The benefits of the subgraph abstraction in highly constrained cases is also shown in the failure cases (Table~\ref{tab:exp4failures}). At 200 edges the naive prioritised search was unable to solve 10\% of problems, while prioritised search with subgraphs could solve them all. The number of failures fell quickly as the density of the graph increased.

\subsubsection{Discussion}

Once a graph becomes moderately dense and interactions between robots become few, the total-single-robot-paths measure becomes a near perfect heuristic. This makes the branching factor a much more critical factor than when using uninformed search. The auto-partitioning algorithm we use does a very poor job limiting this factor and so the subgraph approaches perform poorly. 

Better results could be achieved with better decomposition, but it is not clear whether this could be found in a random graph without excessive computation. Certainly partitioning such graphs by hand is no easy task. Realistic graphs, on the other hand, are generally shaped by natural constraints (e.g. rooms, doors and corridors) which make decomposition much simpler, as we will see in the following experiment.

\subsection{Experiment 3: The Indoor Map}
\label{sec:exp3}

Figure~\ref{fig:k17map} shows the map for our final two experiments, based on the 
floor-plan of Level 4 of the K17 building at the University of New South Wales. A road-map 
of 113 vertices and 308 edges has been drawn (by hand) connecting all the offices and open-plan desk locations. It is imagined that this might be used as a map for a delivery task involving a team of medium-sized robots.

\begin{figure}
\begin{center}
\includegraphics[width=\textwidth]{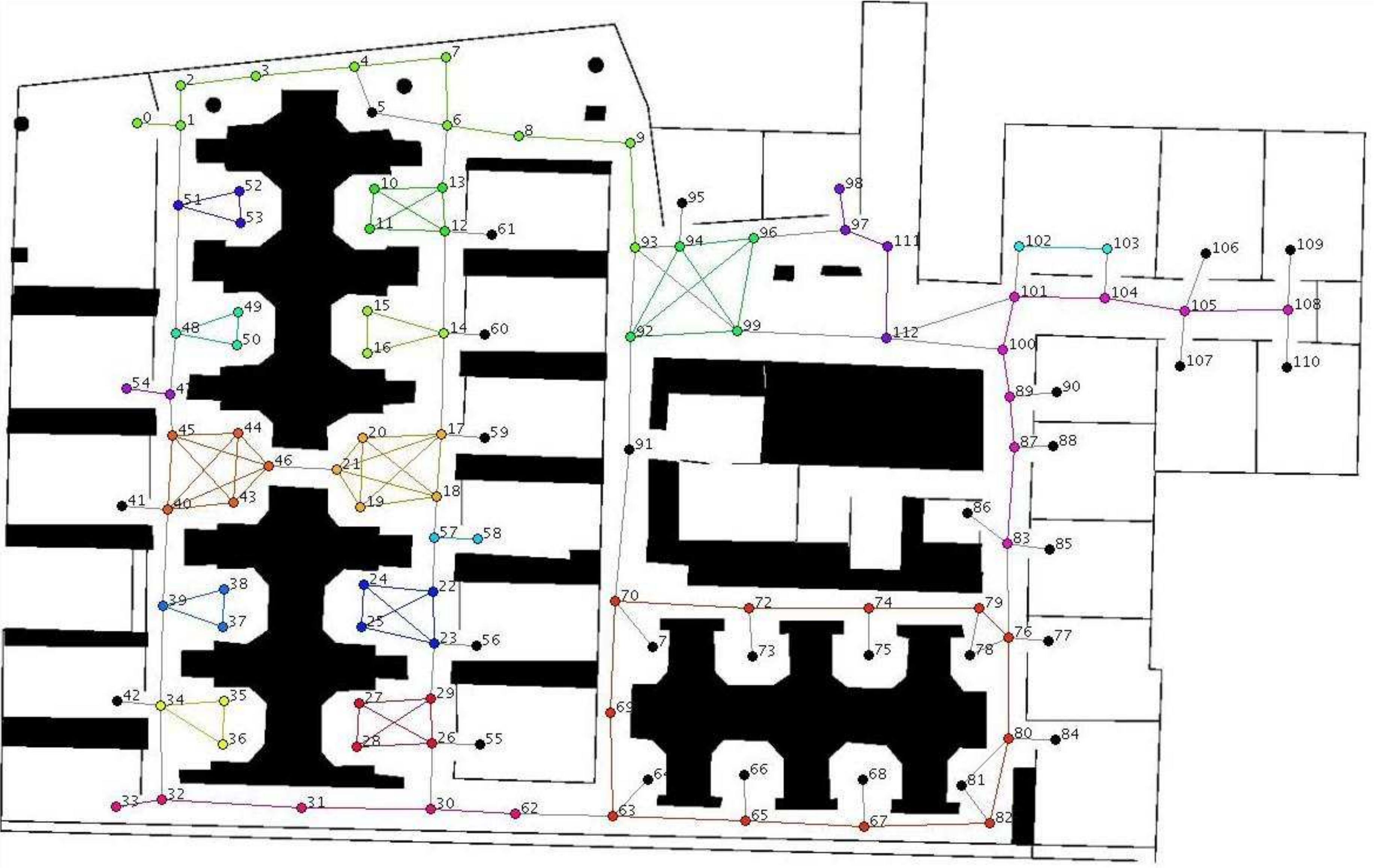}
\end{center}
\caption{The map for Experiment 3. Vertices are coloured by subgraph.} 
\label{fig:k17map}
\end{figure}

The road-map has been partitioned into 47 subgraphs -- 11 cliques, 7 halls and 1
ring, plus 28 remaining 'singleton' nodes (subgraphs containing only one
vertex). The average degree of the reduced graph is 2.1, compared to 2.7 in the original.\footnote{In comparison, the auto-partitioner yielded a partition with fewer subgraphs (avg. 41.8) but higher degree (avg. 2.25).} Partitioning was done by hand with the aid of an interactive GUI
which performed some simple graph analysis and offered recommendations (by indicating nodes which could be added to a hall or clique the user is creating). The
road-map was clearly laid out with partitioning in mind and deciding on this
partitioning was not on the whole difficult. Large open spaces generally became
cliques. Corridors became halls or rings. Only the foyer area (around vertex 94)
caused any particular trouble when finding an ideal partitioning, due to its
slightly unusual topology.\footnote{For the curious, the empty rooms in the
centre of the map, near vertex 91, are bathrooms. We did not consider that the
robots would need to make deliveries there.}

\begin{figure}
\begin{center}
\includegraphics[width=\textwidth]{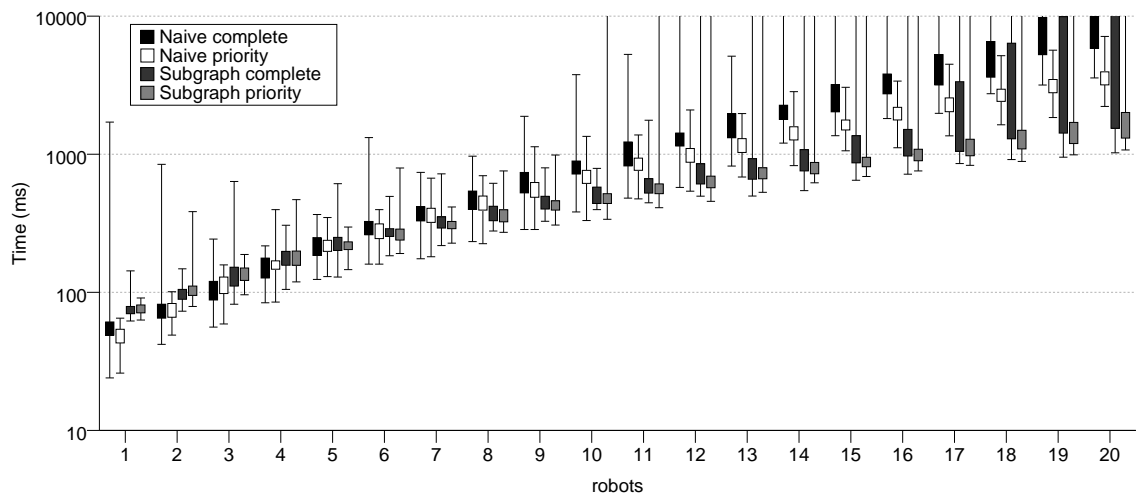}
\end{center}
\caption{Comparing run times for Experiment 3.}
\label{fig:K17runtime}
\end{figure}

A series of experiments were run in this world, varying the number of robots 
from 1 to 20. For each experiment 100 runs were performed in which each robot 
was placed in a random office or desk and was required to make a delivery 
to another random office or desk (chosen without replacement, so no two robots had 
the same goal). Plans were built using both complete and prioritised planners with and without the subgraph abstraction. All four approaches utilised the total single-robot shortest path heuristic from the previous experiment. The running times of each algorithm are shown in Figure~\ref{fig:K17runtime}.

We can see that for small numbers of robots (1 or 2) the naive approaches are significantly better than the subgraph approaches. The overhead of doing subgraph search outweighs its disadvantages in such simple problems. As the number of robots increases the subgraph methods take over, and for around 9 to 16 robots both subgraph methods are significant better than either naive approach.  At 17 robots the combination of complete search with subgraphs begins to perform less well and the two prioritised approaches are the best performers, with a considerable advantage to the subgraph approach.

\begin{figure}[bt]
\begin{center}
\includegraphics[]{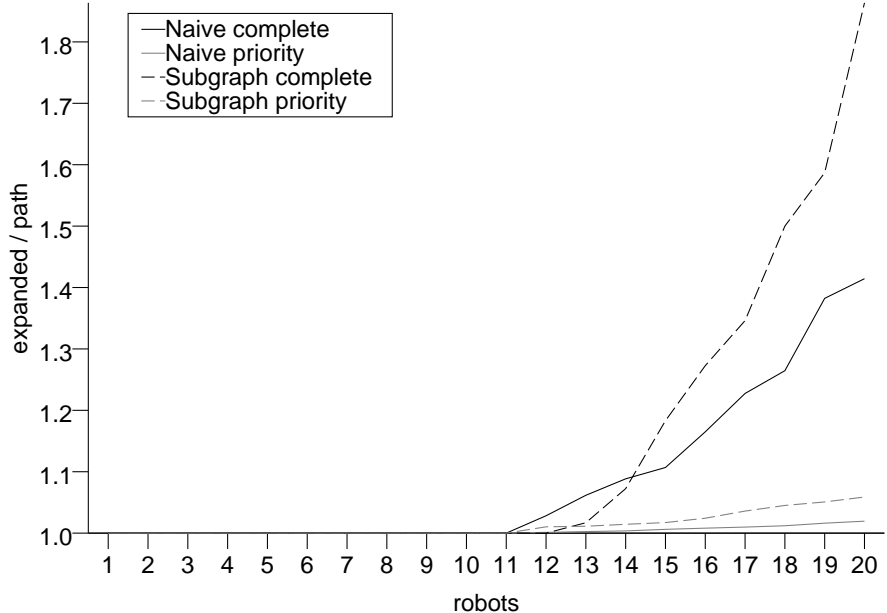}
\end{center}
\caption{Assessing the quality of the heuristic in Experiment 3. The value plotted is the ratio of the number of expanded nodes in the search tree and the goal depth. A perfect heuristic yields a value of 1.0.}
\label{fig:K17heuristic}
\end{figure}

Considering search complexity, let us first examine the performance of the heuristic. Figure~\ref{fig:K17heuristic} plots the ratio or the average number of expanded nodes in the search tree and the goal depth. For a perfect heuristic, this value is 1.0, as it is in this experiment for up to 11 robots. With more than 11 robots the heuristic begins to become inaccurate. The inaccuracy seems to affect the complete planners more badly than the prioritised ones, and in both cases the subgraph approach is more seriously affected than the naive approach. 

\begin{figure}
\begin{center}
\subfigure[branching factor]{\includegraphics[scale=0.6]{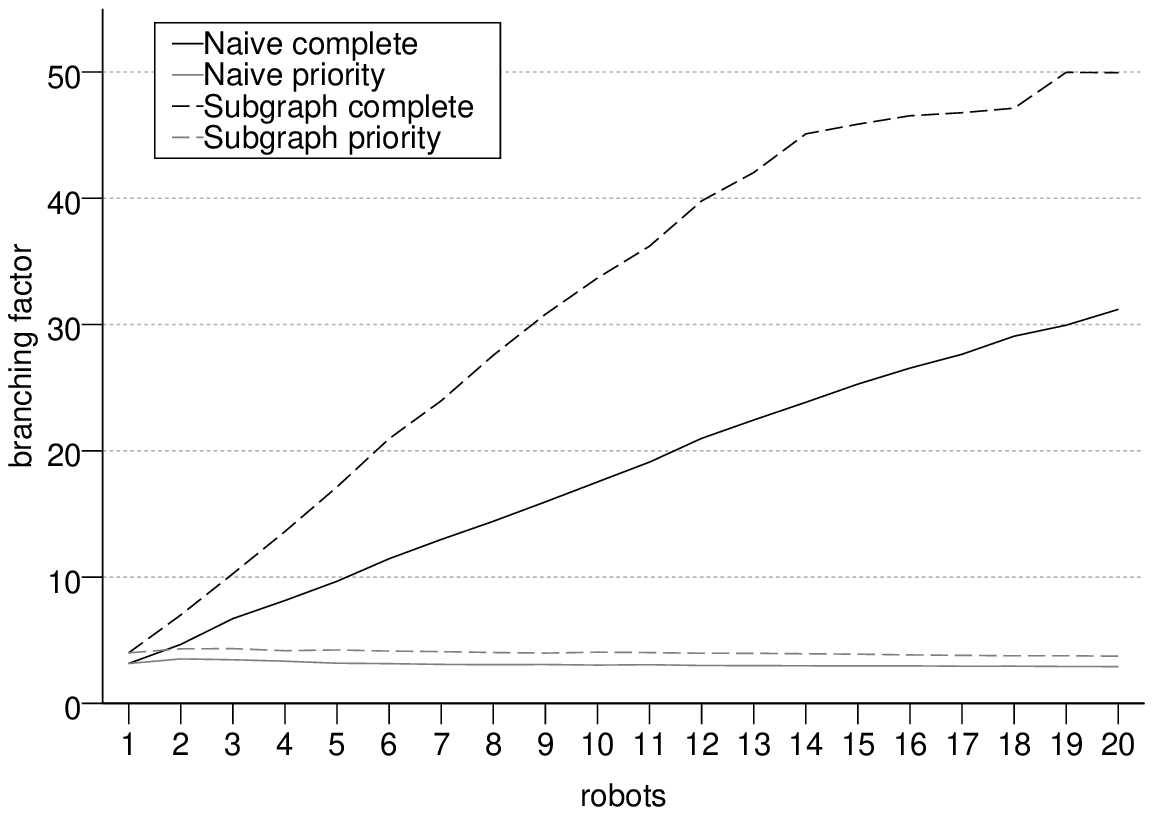}}
\subfigure[goal depth]{\includegraphics[scale=0.6]{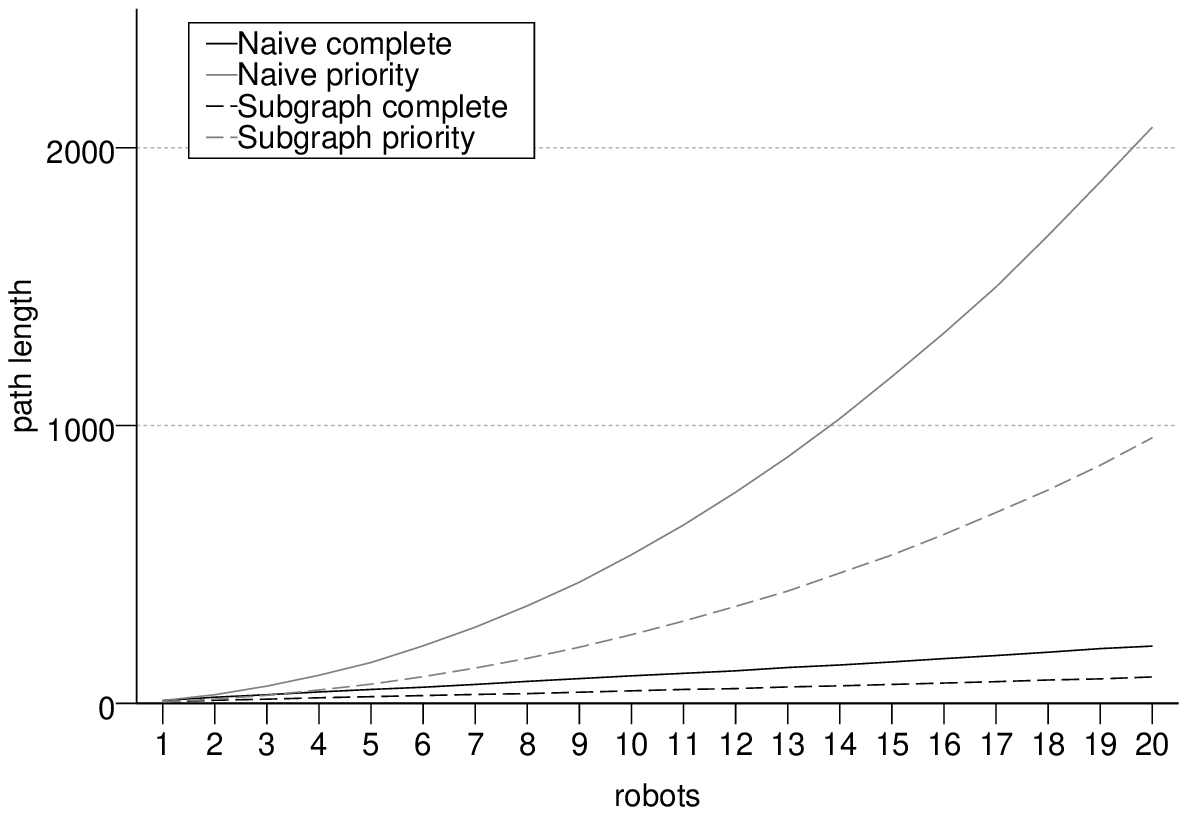}}
\end{center}
\caption{The branching factor and goal depth for Experiment 3.} 
\label{fig:K17bd}
\end{figure}

To explain this difference, note that the heuristic we are using contains significantly less information for subgraph search than it does for naive search. As we do not know exactly where a robot is within a subgraph, we assume that it is in the worst possible position. This means that the value of a configuration tuple is based solely on the allocation of robots to subgraphs, and not on the particular configurations of those subgraphs. Hall subgraphs in particular may have several different configurations for the same set of robots, which will all be assigned the same heuristic value despite having significantly different real distances to the goal.This creates a plateau in the heuristic function which broadens the search. For large numbers of robots these permutations become a significant factor in the search. To improve the heuristic we need to find a way to distinguish the value of different configurations of a subgraph. This will probably require an extra method for each specific subgraph structure.

The graphs of branching factor and goal depth (Figure~\ref{fig:K17bd}) show what we have come to expect~-- the branching factor is larger in the complete search than in prioritised search and the subgraph abstraction makes it worse. Significantly, the branching factor for prioritised search does not increase as more robots are added, because at any step in the plan only one robot can be moved. The goal depth shows the opposite pattern, complete searches are shorter than prioritised searches and the subgraph abstraction approximately halves the search depth in all cases.

\begin{table}
\begin{center}
\caption{The number of planning failures recorded by the two prioritised planning approaches in Experiment 3.}
\label{tab:K17failures}
\begin{tabular}{|c|c|c|}
\hline
& \multicolumn{2}{c|}{\# Failures}  \\
\cline{2-3}
Edges & Naive & Subgraph \\
\hline
1 - 9 & 0 & 0 \\
10 - 19 & 0 & 1 \\
20 & 0 & 2 \\
\hline 
\end{tabular}
\end{center}
\end{table}


Failure rates are recorded in Table~\ref{tab:K17failures}. The story here is different from that of previous experiments. The naive prioritised planner was able to solve all the problems at every depth, but adding the subgraph abstraction caused a small number of failures in more complex problems. It is not clear what has caused this reversal. The cases involved are very complex and elude analysis. This problem warrants further investigation. 

\subsubsection{Discussion}

This experiment has shown that in a realistic problem with an appropriately chosen set of subgraphs the subgraph abstraction is an effective way to reduce the search even when a good heuristic is available. Why does the subgraph abstraction work so well in this example, compared to the random graphs in Experiment 2? The answer seems to be found in the degree of the reduced graph. Automatically partitioning a random graph significantly increases its degree, as we saw in Figure~\ref{fig:exp4partition}(b). This, in turn, increases the branching factor and thus the search time. 

In contrast, when we partition the realistic map we decreased the degree of the graph from 2.7 to 2.1 (by hand) or 2.25 (automatically).  The branching factor for the subgraph methods is still larger (as one transition can still create multiple configurations) but the effect is reduced enough to be overcome by the decrease in goal depth. The indication is that a realistic map has more structure that can be exploited by this abstraction. More investigation is warranted to characterise the features that many this possible. 

\section{Conclusion}
\label{sec:conclusions}

We have demonstrated a new kind of abstract representation for multi-robot path
planning which allows for much faster planning without sacrificing completeness.
Decomposing a road-map into subgraphs is a simple and intuitive way of
providing background knowledge to a planner which can be efficiently exploited.
The key is to find subgraph structures which allow us to treat many arrangements
of robots as equivalent configurations and to compute transitions between these
configurations quickly and deterministically. We have described four such
structures in this paper: stacks, halls, cliques and rings. These structures are
simple enough to compute configurations easily but also common enough to be
found in many realistic maps.

We have shown that abstract plans on these subgraphs can be resolved
deterministically into concrete plans without the need for further search. The planner is sound and complete, although the plans produced are not necessarily optimal. Future
work could prove that it is worth spending more time in the resolution phase to
trim unnecessarily wasteful plans, using, for example, simulated annealing \cite{sanchez1999lop}. It may be that the time saved in abstract
planning leaves us space to do more clever resolution.

The conventional solution to the search-space explosion in multi-robot planning
is prioritisation. We have shown that not only is subgraph-based planning
competitive with prioritised planning but also that the combination of the two
methods is more powerful still and in some cases, partly alleviates the 
incompleteness of the prioritised approach.

\subsection{Related Work}
\label{sec:related}

Abstraction and hierarchical decomposition are standard techniques in planning 
and other related search problems. The use of macro-operators dates back as far as Sacerdoti's early work on the \textsc{Abstrips} planning system \cite{sacerdoti1974} which introduced abstraction hierarchies, whereby a problem could first be solved at a high level of abstraction while ignoring lower-level details. The idea has been re-expressed in many different ways through the history of planning -- far too many to review in detail. This present work was particularly inspired by the `generic types' of \citeA{longfox02} in which they similarly detected substructures in a task-planning problem and solved them using structure-specific planners. 

Hierarchical planning has been applied to path-planning before with abstractions 
such as approximate cell decomposition \cite{Barbehenn1995,Conte1995}, 
generalised Voronoi graphs \cite{choset1995sbp,Choset1996} and general 
ad-hoc hierarchical maps \cite{bakker2005hdp,zivkovic2005hmb,zivkovic2006hmb}, but the structures identified in these examples do not 
carry over well to the multi-robot scenario.

Other faster solutions to the multi-robot problem are available if we can assume the existence of ``garage'' locations for each robot \cite{lavalle98} or other kinds of temporary free space \cite{sharma1992cmp,fitch2003rph}. The method we present here makes no such assumption and is thus more general in application. 
There does not appear to be any previous work which provides a complete abstraction-based planner for the general multi-robot problem.

The work that bears most similarity to our own is not explicitly in robot path planning, but in solving the Sokoban puzzle \cite{botea2003,junghanns2001}. That domain is significantly more constrained than ours (the map is necessarily an orthogonal grid and the stones can only move when they are pushed by the man) but the method they employ is similar. Dividing a map up into \emph{rooms} and \emph{tunnels} they use the strongly-connected-component algorithm to identify equivalent arrangements of boulders in each subpart. Equivalent arrangements are then treated as a single abstract state -- corresponding to a configuration in our formulation -- which is used as the state in a global search. The particular structures they represent are different, but the general ideas of partitioning into independent local subproblems and identifying abstract states from strongly connected components, are the same as those employed in this work.

\subsection{Future Plans}

In the next stage of this project we plan to examine the symmetries provided by
the subgraph representation. Recent work in symbolic task-planning \cite{porteous04} has shown that recognising and exploiting symmetries and almost-symmetries in
planning problems can eliminate large amounts of search. Subgraph configurations
provide a natural ground for similar work in our problem domain and we expect
similar improvements are possible. 

We also plan to further investigate the problem of automatic subgraph partitioning of maps. Having identified the importance of trading off path depth against branching factor, we plan to make a partitioning algorithm which chooses subgraphs that optimise this relationship. Automatically finding an optimal partition could be very hard, but creating a powerful interactive partitioning tool for a human operator would seem to be a viable compromise. One approach would be to adapt the auto-partitioner we describe in this paper so that the seed vertices are selected by the user, who is then allowed to choose from a number of possible subgraphs based on this selection.

Further subgraph structures can also be identified, and we are currently working on formalising the properties of tree-structured subgraphs. Another possibility would be to generalise cliques and rings into a new `ring-with-chords' structure, although characterising such a structure may prove difficult. 

There have been many other advances in search technology which may be applicable to the multi-robot planning problem. We are currently in the process of re-expressing the entire problem as a constraint satisfaction problem (CSP) in the Gecode constraint engine \cite{gecode}.  We believe that the CSP formulation will be a powerful way to take advantage of the structural knowledge that subgraph decomposition represents.

\acks{I'd like to thank Jonathan Paxman, Brad Tonkes and Maurice Pagnucco for their help in developing the ideas in this paper and proofreading the drafts.}

\appendix

\section{Proof of Soundness and Completeness}

In this appendix we set up the necessary formal definitions and then prove the soundness and completeness of the abstract planning process. The main result is a theorem showing that an abstract plan exists for a given problem if and only if a concrete plan also exists. 

\subsection{Graphs and Subgraphs}
An \e{induced subgraph} $S \subseteq G$ is a graph $S = (V(S), E(S))$ such that
 \[ V(S) \subseteq V(G) \hspace{1cm} E(S) = \setof{(u,v)}{u,v \in V(S), (u,v)
 \in E(G)} \]
Intuitively this describes a subgraph consisting of a subset of vertices with
all their connecting edges from the parent graph. Thus an induced subgraph can
be specified solely in terms of its vertices. We shall henceforth assume that all
subgraphs we refer to are induced. 

A \e{partition} $\mathcal{P}$ of $G$ is a set  $\{S_1, \ldots, S_m\}$ of 
subgraphs of $G$ satisfying \[ V(G) = \bigcup_{i=1 \ldots m} V(S_i)
 \text{\hspace{1cm}and\hspace{1cm}}
V(S_i) \cap V(S_j) = \emptyset, ~\forall i,j : i \neq j \]
 
Given a graph $G$ and a partition $\mathcal{P}$ we can construct the \e{reduced
graph} $X$ of $G$ by contracting each subgraph to a single vertex
\begin{align*}
V(X) & = \mathcal{P} \\
E(X) & = \setof{(S_i, S_j)}{\exists x \in S_i, y \in S_j : (x,y) \in G}
\end{align*}
 
\subsection{Robots and Arrangements}
 
Let us assume we have a set of robots $R$. An \e{arrangement} $a$ of robots in 
a graph $G$ is a 1-to-1 partial function $a : V(G) \rightarrow R$. An 
arrangement represents the locations of robots within $G$. If $a(v) = r$, then 
robot $r$ is at vertex $v$. We shall use the notation $a(v) = \Box$ to indicate 
that $a$ is undefined at $v$, i.e.\ vertex $v$ is unoccupied. An 
arrangement may not necessarily include every robot in $R$. Two arrangements $a$
and $b$ are said to be \e{disjoint} if $range(a) \cap range(b) = \emptyset$. Let $A_G$
represent the set of all arrangements of $R$ in $G$. 

If $S$ is a subgraph of $G$, and $a$ is an arrangement of $R$ in $G$ then we 
define $a / S$, the \e{induced arrangement} of $R$ in $S$, as \[a/S(v) = 
a(v), ~\forall v \in V(S)\] 

If $S_1$ and $S_2$ are disjoint subgraphs of $G$ with disjoint arrangements 
$a_1$ in $S_1$ and $a_2$ in $S_2$, then we define the \e{combined arrangement} 
$a = a_1 \otimes a_2$ as an arrangement in $S_1 \cup S_2$ satisfying \[a(v) = 
\begin{cases} a_1(v) & \text{~if~} v \in S_1 \\
a_2(v) & \text{~if~} v \in S_2 \end{cases}\]

\begin{lemma} \label{lem:induced_arrangements}
If $a$ is an arrangement in $G$ with partition $\mathcal{P}
= \{S_1, \ldots, S_m\}$ and $\{a_1, \ldots, a_m\}$ is the set of induced 
arrangements $a_i = a / S_i$, then the combined arrangement $a_1 \otimes \dots 
\otimes a_m = a$.
\end{lemma}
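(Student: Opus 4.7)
The plan is to verify the claim by a direct pointwise check, after first confirming that the $m$-fold combined arrangement $a_1 \otimes \dots \otimes a_m$ is well-defined. The definition of $\otimes$ was given for two disjoint subgraphs with disjoint arrangements, so the first task is to lift it to $m$ subgraphs by iterating, which requires showing the disjointness hypotheses hold at each step.

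First, I would observe that since $\mathcal{P}$ is a partition, the subgraphs $S_1, \ldots, S_m$ are pairwise vertex-disjoint. Consequently, for $i \neq j$ and any $v \in V(S_i)$, $v \notin V(S_j)$, so $a_i$ and $a_j$ have disjoint domains. Moreover, since $a$ is a 1-to-1 partial function, each robot occurs as $a(v)$ for at most one $v$, and that $v$ lies in a unique $S_i$; therefore no robot appears in the ranges of two different induced arrangements, so $\mathrm{range}(a_i) \cap \mathrm{range}(a_j) = \emptyset$. This lets me define the iterated combination unambiguously, e.g. by induction: set $b_1 = a_1$ and $b_i = b_{i-1} \otimes a_i$, noting that $b_{i-1}$ lives on $S_1 \cup \dots \cup S_{i-1}$ which is disjoint from $S_i$, and their ranges are disjoint by the above.

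Next, I would verify pointwise equality. Pick any $v \in V(G)$. Because $\mathcal{P}$ is a partition, there is a unique index $i$ with $v \in V(S_i)$. By definition of the induced arrangement, $a_i(v) = a(v)$ (including the case $a(v) = \Box$). By the definition of $\otimes$, unfolded inductively, $(a_1 \otimes \dots \otimes a_m)(v) = a_i(v)$, since $v$ lies in $S_i$ and in no other $S_j$. Chaining these gives $(a_1 \otimes \dots \otimes a_m)(v) = a(v)$ for all $v$, which is the desired equality of partial functions.

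The only real subtlety, and the step I would be most careful about, is the bookkeeping for the iterated $\otimes$: ensuring at every stage that both the domain-disjointness and range-disjointness preconditions of $\otimes$ are satisfied, so that the $m$-fold combination is well-defined and independent of the order of combination. Once that is cleanly handled, the remainder is a routine case split on which block of the partition contains $v$.
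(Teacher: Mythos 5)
The paper states this lemma without proof, treating it as immediate from the definitions, and your argument is exactly the routine verification that is being left implicit: pairwise disjointness of the $S_i$ plus injectivity of $a$ make the iterated $\otimes$ well-defined, and the pointwise case split on which block contains $v$ gives the equality. Your proof is correct and consistent with the paper's (omitted) reasoning; no issues.
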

Given this identity, we can uniquely identify an arrangement 
$a$ in $G$ as the combination of its induced arrangements over a partition 
$\mathcal{P}$.

\subsection{Concrete Plans}
We now need to define what it means to move robots around a graph. First we 
will define two operators $\oplus$ and $\ominus$ which respectively add and 
remove robots from a given arrangement. Formally $\oplusg{G} : A_G \times R 
\times V(G) \rightarrow A_G$ is a mapping which satisfies \[ a \oplusg{G} (r,v) 
= b \] where
\[b(u) = 
\begin{cases}
 r &\text{~if~} u = v \\
 a(u) & \text{~otherwise}
\end{cases}
\]
Similarly $\ominusg{G} : A_G \times R \rightarrow A_G$ is a mapping which 
satisfies \[ a \ominusg{G} r = b \] where
\[
b(u) = 
\begin{cases} 
\Box & \text{~if~} a(u) = r \\
a(u) & \text{~otherwise}
\end{cases}
\]
We will omit the subscript $G$ when it is clear from the context.

We can now define a \e{plan-step} $s \in R \times E(G)$ in $G$ as a robot/edge 
pair $(r, u, v)$, representing the movement of $r$ along the edge from $u$ to 
$v$, with $u \neq v$. A plan-step is \e{applicable} to an arrangement $a \in 
A_G$ iff $a(u) = r$ and $a(v) = \Box$. In this case we can apply $s$ to $a$ to 
produce a new arrangement $b = s(a)$ where \[ s(a) = (a \ominus r) \oplus (r, 
v) \]

A \e{concrete plan} (or just \e{plan}) in $G$ from $a \in A_G$ to $b \in A_G$ is a
sequence of plan-steps $\seq{s_1, \ldots, s_l}$ such that there exist
arrangements $a_0, \ldots, a_l \in A_G$ with $s_i$ applicable to $a_{i-1}$ and
\begin{align*}
a_0 & =  a \\
a_l & =  b \\
a_i & =  s_i(a_{i-1}), ~\forall i : 0 < i \leq l 
\end{align*}

\begin{lemma} If $S$ is a subgraph of $G$ and $P$ is a plan in $S$ then $P$ 
is also a plan in $G$.
\end{lemma}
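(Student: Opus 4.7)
The plan is to show that every ingredient in the definition of a concrete plan (the plan-steps, the intermediate arrangements, and the applicability condition) transfers from $S$ to $G$ without modification, essentially because $S$ is an \emph{induced} subgraph of $G$.

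First I would observe that since $S$ is induced, $V(S) \subseteq V(G)$ and $E(S) \subseteq E(G)$, so every plan-step $s_i = (r, u, v)$ appearing in $P$ (with $(u,v) \in E(S)$) is automatically a plan-step in $G$. Next, given the witnessing arrangements $a_0, \ldots, a_l$ in $S$, I would define their natural extensions $\hat a_i : V(G) \rightarrow R$ by $\hat a_i(v) = a_i(v)$ for $v \in V(S)$ and $\hat a_i(v) = \Box$ for $v \in V(G) \setminus V(S)$. Each $\hat a_i$ is still a $1$-to-$1$ partial function with the same range as $a_i$, and so qualifies as an arrangement in $G$.

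The main technical step is to verify two things: (i) that $s_i$ applicable to $a_{i-1}$ in $S$ implies $s_i$ applicable to $\hat a_{i-1}$ in $G$, and (ii) that $\hat a_i = s_i(\hat a_{i-1})$ under the operators $\oplusg{G}$ and $\ominusg{G}$. Part (i) is immediate because applicability only tests the values of the arrangement at $u$ and $v$, both of which lie in $V(S)$, and $\hat a_{i-1}$ agrees with $a_{i-1}$ there. For part (ii) I would unfold the definitions: $\ominus$ changes only the vertex previously occupied by $r$, and $\oplus$ changes only the target vertex $v$; both of these vertices lie in $V(S)$, so $\hat a_{i-1}$ and $a_{i-1}$ undergo identical modifications, leaving the values on $V(G) \setminus V(S)$ as $\Box$ throughout. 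Hence $\hat a_i$ coincides with the extension of $s_i(a_{i-1}) = a_i$, as required.

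I do not expect any substantive obstacle here; the only subtlety is being careful that arrangements are treated as partial functions, so that extending by $\Box$ outside $V(S)$ is legitimate and that the $\oplus$ / $\ominus$ operators are evaluated with respect to the correct ambient graph. With the extended sequence $\hat a_0, \ldots, \hat a_l$ satisfying all the clauses in the definition of a concrete plan, $P$ is a concrete plan in $G$ from $\hat a_0$ to $\hat a_l$, completing the argument.
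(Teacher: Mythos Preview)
Your argument is correct. The paper itself states this lemma without proof, treating it as an immediate consequence of the definitions; your write-up supplies exactly the routine verification one would expect, namely extending each arrangement by $\Box$ outside $V(S)$ and checking that applicability and the $\oplus/\ominus$ updates are unaffected because they only touch vertices in $V(S)$.
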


\begin{lemma} If $P$ is a plan in $G$ from $a$ to $b$ and $Q$ is a plan in $G$
from $b$ to $c$, then the concatenation of $P$ and $Q$, written $P.Q$ is a plan
in $G$ from $a$ to $c$.
\end{lemma}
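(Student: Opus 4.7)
The plan is to unfold the definition of a concrete plan on both sides and glue the two witnessing sequences of intermediate arrangements end-to-end. Since $P$ is a plan from $a$ to $b$, by definition we may write $P = \seq{s_1, \ldots, s_l}$ together with a sequence $a_0, \ldots, a_l \in A_G$ witnessing that $a_0 = a$, $a_l = b$, and $s_i$ is applicable to $a_{i-1}$ with $a_i = s_i(a_{i-1})$. Similarly, $Q$ as a plan from $b$ to $c$ gives $Q = \seq{t_1, \ldots, t_m}$ with witnesses $b_0, \ldots, b_m$ where $b_0 = b$, $b_m = c$, and $t_j$ is applicable to $b_{j-1}$ with $b_j = t_j(b_{j-1})$.

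Next I would form the concatenated step sequence $P.Q = \seq{s_1, \ldots, s_l, t_1, \ldots, t_m}$ and the concatenated witness sequence $a_0, a_1, \ldots, a_l, b_1, b_2, \ldots, b_m$. The critical observation, which is where the two halves are joined, is that $a_l = b = b_0$, so no auxiliary arrangement is needed at the seam: the last arrangement of the first sequence literally equals the first arrangement of the second. This ensures the applicability conditions carry over seamlessly --- in particular $t_1$, which was applicable to $b_0$ by hypothesis, is thereby applicable to $a_l$.

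Having verified that each of the $l+m$ steps in $P.Q$ is applicable to the preceding arrangement in the combined witness sequence, and that the first and last arrangements are $a$ and $c$ respectively, the sequence $P.Q$ satisfies the defining conditions of a concrete plan in $G$ from $a$ to $c$. The main (and really only) subtlety is checking the index bookkeeping at the junction; no reasoning about graph structure, subgraphs, or the $\oplus/\ominus$ operators is needed beyond what has already been invoked to define plan applicability. I expect the write-up to be a short paragraph with essentially no hard step.
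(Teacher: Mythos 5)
Your proof is correct: unfolding the definition of a concrete plan on each side and gluing the two witness sequences at the shared arrangement $a_l = b = b_0$ is exactly the intended justification. The paper itself states this lemma without proof, treating it as immediate from the definitions, and your argument supplies precisely that immediate verification.
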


\begin{lemma} Let $P \| Q$ denote the set of all interleavings of sequences $P$ and $Q$. Let $S_1$ and $S_2$ be disjoint subgraphs of $G$, $P_1$
be a plan on $S_1$ from $a_1$ to $b_1$ and $P_2$ 
be a plan on $S_2$ from $a_2$ to $b_2$, such that $a_1$ and $a_2$ are disjoint.
Any arbitrary interleaving $P \in P_1 \| P_2$ is a plan on $G$ from $a_1 \otimes a_2$ to $b_1 \otimes b_2$.
\end{lemma}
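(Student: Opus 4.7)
The plan is to proceed by induction on the length of the interleaving $P$, exploiting the fact that plan-steps from $P_1$ and $P_2$ act on disjoint sets of vertices and therefore cannot interfere with one another's applicability. Since $S_1$ and $S_2$ are vertex-disjoint subgraphs, any plan-step $(r,u,v)$ drawn from $P_1$ must have $u,v\in V(S_1)$ (because $P_1$ is a plan on $S_1$ and its edges lie in $E(S_1)$), and symmetrically for $P_2$. Consequently the \Enter{}/\Exit{}-style applicability conditions $a(u)=r$ and $a(v)=\Box$ for a $P_1$-step only depend on the restriction $a/S_1$, and similarly for $P_2$.

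Let $P=\seq{s_1,\dots,s_{l_1+l_2}}\in P_1\|P_2$. For each prefix length $i$, let $P_1^{(i)}$ be the subsequence consisting of those $s_j$ with $j\le i$ that came from $P_1$, and let $P_2^{(i)}$ be the corresponding subsequence from $P_2$. Write $a_1^{(i)}$ for the arrangement on $S_1$ obtained by applying $P_1^{(i)}$ to $a_1$, and $a_2^{(i)}$ for the arrangement on $S_2$ obtained by applying $P_2^{(i)}$ to $a_2$; both are well-defined because $P_1^{(i)}$ and $P_2^{(i)}$ are prefixes of the plans $P_1$ and $P_2$. The inductive claim I would establish is that, starting from $a_1\otimes a_2$, the prefix $\seq{s_1,\dots,s_i}$ is applicable in $G$ and produces the arrangement $a_1^{(i)}\otimes a_2^{(i)}$.

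The base case $i=0$ is immediate from Lemma~\ref{lem:induced_arrangements}. For the inductive step, assume the result holds for $i$; the current arrangement is $a_1^{(i)}\otimes a_2^{(i)}$. Without loss of generality suppose $s_{i+1}=(r,u,v)$ comes from $P_1$, so $u,v\in V(S_1)$ and $s_{i+1}$ is applicable to $a_1^{(i)}$ by the definition of a plan on $S_1$. By the definition of $\otimes$, the values of $a_1^{(i)}\otimes a_2^{(i)}$ at $u$ and $v$ agree with $a_1^{(i)}$, so $s_{i+1}$ is applicable in $G$. Applying $s_{i+1}$ modifies only the entries at $u$ and $v$, both of which lie in $V(S_1)$ and therefore outside $V(S_2)$, so the $S_2$-restriction is unchanged; the $S_1$-restriction becomes $s_{i+1}(a_1^{(i)})=a_1^{(i+1)}$, giving the combined arrangement $a_1^{(i+1)}\otimes a_2^{(i)}=a_1^{(i+1)}\otimes a_2^{(i+1)}$ as required. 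The case $s_{i+1}\in P_2$ is symmetric. Taking $i=l_1+l_2$ yields $a_1^{(l_1+l_2)}=b_1$ and $a_2^{(l_1+l_2)}=b_2$, so $P$ is a plan in $G$ from $a_1\otimes a_2$ to $b_1\otimes b_2$.

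\paragraph{Main obstacle.}
The only delicate point is justifying that $a_1^{(i)}\otimes a_2^{(i)}$ is always a well-defined (i.e.~1-to-1) partial function: this requires that the ranges of $a_1^{(i)}$ and $a_2^{(i)}$ remain disjoint throughout the interleaving. This follows because plans never create or destroy robots, so $\mathrm{range}(a_k^{(i)})=\mathrm{range}(a_k)$ for $k=1,2$, and these are disjoint by hypothesis. Aside from this bookkeeping, the proof is a routine induction once the disjointness of the vertex sets is used to decouple applicability.
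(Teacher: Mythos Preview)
Your argument is correct. The paper itself states this lemma without proof---it is one of several elementary lemmas in the appendix (along with the lemmas on induced arrangements, subgraph plans, and plan concatenation) that are asserted and then used, with detailed proofs reserved for the less obvious results such as Lemma~\ref{lem:a} and the main theorem. So there is no ``paper's own proof'' to compare against here.

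That said, your induction on the prefix length is exactly the natural way to fill this gap, and the invariant you maintain---that after $i$ steps the arrangement is $a_1^{(i)}\otimes a_2^{(i)}$---is the right one. The one point you flagged as the main obstacle (that the ranges of $a_1^{(i)}$ and $a_2^{(i)}$ stay disjoint so that $\otimes$ remains well-defined) is indeed the only thing requiring care, and your justification via range-preservation of plan-steps is correct given the definition $s(a)=(a\ominus r)\oplus(r,v)$. Nothing is missing.
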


\subsection{Configurations}

Having defined the machinery for concrete plans, we now introduce 
abstraction. The key idea is that of a \emph{configuration} which is an
abstraction of arrangements. If the robots in a subgraph can be rearranged from
one arrangement to another, without any of the robots having to leave the
subgraph during the rearrangement, then those two arrangements can be treated as
equivalent. Configurations represent sets of equivalent arrangements in a subgraph. 
So, for example, in a stack subgraph a configuration is the set of all arrangements 
which have the same ordering of robots. An arrangement over an entire partitioned 
graph can be abstracted as the list of configurations it produces in each of its subgraphs.

Formally, we define a \e{configuration relation} $\simg{G}$ on graph $G$ as an
equivalence relation over $A_G$ such that $a \simg{G} b$ iff there exists plans
$P_{ab}$ and $P_{ba}$ in $G$ from $a$ to $b$ and from $b$ to $a$ respectively.

A \e{configuration} $c$ of $G$ is an equivalence class of $\simg{G}$. We write 
$c =\hspace{1ex}\conf{G}{a}$ to represent the equivalence class containing 
arrangement $a$. Let $C_G$ be the set of all configurations of $G$.

\begin{lemma}
If $a \simg{G} b$ then $range(a) = range(b)$
\end{lemma}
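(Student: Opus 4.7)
The plan is to show that the range of an arrangement is invariant under any applicable plan-step, and then extend this invariance to entire plans by induction on plan length. Since $a \simg{G} b$ guarantees the existence of a plan $P_{ab}$ from $a$ to $b$, applying the invariant to $P_{ab}$ will immediately give $range(a) = range(b)$.

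For the single-step invariant, suppose $s = (r, u, v)$ is a plan-step applicable to an arrangement $a$, so by definition $a(u) = r$ and $a(v) = \Box$. Then $s(a) = (a \ominus r) \oplus (r, v)$. Because $a$ is a 1-to-1 partial function, $u$ is the unique vertex with $a(u) = r$, so $a \ominus r$ removes $r$ from the range entirely while leaving all other robot assignments unchanged. The subsequent application of $\oplus (r, v)$ reinserts $r$ at $v$. Hence $range(s(a)) = range(a) \setminus \{r\} \cup \{r\} = range(a)$, and no other robot is affected.

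To extend this to plans, I would induct on the length $l$ of $P_{ab} = \seq{s_1, \ldots, s_l}$ using the witnessing sequence $a_0, \ldots, a_l$ from the definition of a concrete plan, where $a_0 = a$ and $a_l = b$. The base case $l = 0$ forces $a = b$ and the result is immediate. For the inductive step, $s_i$ is applicable to $a_{i-1}$, so by the single-step invariant $range(a_i) = range(a_{i-1})$; chaining these equalities yields $range(b) = range(a_l) = range(a_0) = range(a)$.

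The argument is essentially a routine invariance check, so I do not anticipate a real obstacle. The only point that deserves care is the explicit appeal to the 1-to-1 property of arrangements, which is what ensures that $\ominus r$ genuinely removes $r$ from the range rather than leaving some other vertex still mapped to $r$. Note that invoking $P_{ba}$ instead would give the same conclusion by symmetry, so only one of the two plans in the definition of $\simg{G}$ is actually needed.
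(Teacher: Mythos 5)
Your proof is correct and is essentially the intended argument: the paper states this lemma without proof, treating it as a routine verification, and the single-step range invariance followed by induction on plan length is exactly the reasoning being taken for granted. Your observation that the 1-to-1 property of arrangements is what makes $\ominus\, r$ remove $r$ cleanly from the range is the one point of substance, and you handle it correctly.
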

Given this identity, we can unambiguously define the range of a configuration
$c$ to be \[range(c) = range(a), \text{~for any~} a \in c\]

We can now extend our definitions of $\oplus$ and $\ominus$ to configurations.
If $c \in C_G$ is a configuration of $G$, $r \in R$ and $v \in V(G)$
then
\begin{align*}
c\oplusg{G} (r, v) & = \setof{\conf{G}{a \oplusg{G} (r, v)}}{a \in c, a(v) = \Box}\\
c\ominusg{G} (r, v) & = \setof{\conf{G}{a \ominusg{G} r}}{a \in c, a(v) = r}
\end{align*}
Note that $\oplus$ and $\ominus$ map configurations to sets of
configurations.\footnote{Astute readers will notice that $c\ominusg{G} (r, v)$
never contains more than one element, although it may be empty.}

Given a partition $\mathcal{P} = \{S_1, \ldots, S_m\}$ of $G$ and a 
corresponding set of configuration relations $\{\simg{S_1}, \ldots, 
\simg{S_m}\}$ we define a \e{configuration tuple} $\gamma$ of $R$ in $G$ as a 
tuple $(c_1, \ldots, c_m)$ where $\forall i : c_i \in C_{S_i}$, and 
\[\bigcup_{i=1 \ldots m} range(c_i) = R\]
\[range(c_i) \cap range(c_j) = \emptyset, ~\forall i,j : i \neq j\]
A configuration tuple represents the abstract state of all the robots in the
entire graph, in terms of the configurations of the individual subgraphs in the
partition. Given an arrangement $a$ of $G$ we can construct a corresponding configuration
tuple $\gamma(a) = (c_1, \ldots, c_m)$ where $c_i = \hspace{1ex}\conf{S_i}{a/S_i}$.
Conversely, if $a / S_i \in c_i$ for all $c_i$ in $\gamma$, then we write $a \in
\gamma$.

\begin{lemma} \label{lem:a} If $a$ and $b$ are arrangements in graph $G$ with
partition
$\{S_1, \ldots, S_m\}$ and $\gamma$ is a configuration tuple in $G$ with $a,b \in
\gamma$, then there exists a plan from $a$ to $b$ in $G$.
\end{lemma}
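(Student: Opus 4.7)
The plan is to unpack what $a, b \in \gamma$ means subgraph-by-subgraph, obtain a local plan within each $S_i$, and then stitch these local plans into a single plan on $G$ using the disjointness of the partition. All the raw material is already available: earlier lemmas give us that (i) a plan on an induced subgraph is also a plan on $G$, (ii) plans can be concatenated, and (iii) plans on disjoint subgraphs with disjoint-range endpoints can be interleaved.

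First I would peel off the subgraph structure. Writing $\gamma = (c_1, \ldots, c_m)$, the hypothesis $a \in \gamma$ unfolds to $a/S_i \in c_i$ for every $i$, and similarly $b/S_i \in c_i$. Because $c_i$ is an equivalence class of $\simg{S_i}$, we immediately get $a/S_i \simg{S_i} b/S_i$, so by the definition of the configuration relation there exists a concrete plan $P_i$ in $S_i$ from $a/S_i$ to $b/S_i$. Each $P_i$ is then a valid plan in $G$ by the subgraph-lifting lemma.

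Next I would assemble $P_1, \ldots, P_m$ into a single plan by induction on $m$. For $k \geq 1$, let $T_k = S_1 \cup \ldots \cup S_k$. The base case $k = 1$ is immediate. For the inductive step, suppose we have a plan $Q_k$ on $T_k$ from $a/S_1 \otimes \ldots \otimes a/S_k$ to $b/S_1 \otimes \ldots \otimes b/S_k$. The subgraph $T_k$ is disjoint from $S_{k+1}$ because $\{S_1, \ldots, S_m\}$ is a partition, and the ranges of $a/S_1 \otimes \ldots \otimes a/S_k$ and $a/S_{k+1}$ are disjoint by the configuration-tuple condition $range(c_i) \cap range(c_j) = \emptyset$. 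The interleaving lemma then produces a plan $Q_{k+1}$ on $T_{k+1}$ with the required endpoints. When $k = m$, the partition property $V(G) = \bigcup V(S_i)$ gives $T_m = G$, and Lemma~\ref{lem:induced_arrangements} identifies $a/S_1 \otimes \ldots \otimes a/S_m$ with $a$ (similarly for $b$), so $Q_m$ is the desired plan from $a$ to $b$ in $G$.

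The main obstacle is really just the bookkeeping in the induction: the interleaving lemma is stated for two subgraphs, so at each stage I need to verify that the compound subgraph $T_k$ and the next atom $S_{k+1}$ are disjoint and that the induced arrangements still have disjoint ranges. Both conditions are inherited directly from the partition property and from the disjoint-range requirement built into the definition of a configuration tuple, so no genuine difficulty arises; the argument is essentially a chase through definitions.
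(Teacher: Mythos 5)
Your proposal is correct and follows essentially the same route as the paper: extract a per-subgraph plan $P_i$ from $a/S_i \simg{S_i} b/S_i$ and then interleave the $P_i$ into a single plan on $G$, identifying the combined endpoints with $a$ and $b$. The only difference is that the paper applies the interleaving to all $m$ plans at once by writing $P \in P_1 \| \dots \| P_m$, whereas you justify this $m$-fold interleaving by an explicit induction on the two-subgraph lemma, which is a more careful rendering of the same argument.
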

\begin{proof} 
For each $i = 1 \ldots m$, let $a_i = a / S_i$ and  $b_i = b /
S_i$. Now $a_i \in c_i$ and $b_i \in c_i$ so $a_i \simg{S_i} b_i$. Therefore
from the definition of $\sim$ there exists a plan $P_i$ from $a_i$ to $b_i$ in
$S_i$.

Let $P \in P_1 \| \dots \| P_m$. Since the $P_i$'s are plans on disjoint
subgraphs, $P$ is a plan from $a_1 \otimes \dots \otimes a_m = a$ to $b_1
\otimes \dots \otimes b_m = b$ as required.
\end{proof}

\subsection{Abstract Plans}

With configuration tuples as our abstract state representation, we can now 
define abstract plans, as sequences of subgraph transitions -- plan steps 
which move a robot from one subgraph to another. We will then prove the main 
result of this section, that an abstract plan for a problem exists if and only 
if a corresponding concrete plan exists. This will allow us later to prove the 
soundness and completeness of our subgraph planning algorithm. 

For the rest of this section we shall assume that our graph $G$ has a partition 
$\mathcal{P} = \{S_1, \ldots, S_m\}$ with corresponding configuration relations 
$\{\simg{S_1}, \ldots, \simg{S_m}\}$.

A \e{subgraph transition} (or just \emph{transition}) is a plan-step $s =
(r,u,v)$ such that $u \in S_x$, $v \in S_y$ and $S_x \neq S_y$. A transition 
$s = (r, u, v)$ is \e{applicable} to a configuration tuple $\gamma = (c_1,
\ldots, c_m)$ of $G$ if
\begin{align*}
   & c_x \ominusg{S_x} (r, u) \neq \emptyset, \text{~where~} u \in S_x, \\
\text{and~} & c_y \oplusg{S_y} (r, v) \neq \emptyset, \text{~where~} v \in S_y.
\end{align*}
That is, the robots in $S_x$ can be rearranged so that robot $r$ can leave via $u$
and the robots in $S_y$ can be rearranged so that $v$ is empty for $r$ to enter.

If transition $s = (r, u, v)$ is applicable to $\gamma = (c_1, \ldots, c_m)$
with $u \in S_x$ and $v \in S_y$ then we can
apply $s$ to $\gamma$ to compute a set $s(\gamma)$ of configuration-tuples
\[(c_1', \ldots c_m') \in s(\gamma)\]
if and only if
\begin{align*}
&c_x' \in c_x \ominusg{S_x} (r,u), \\
&c_y' \in c_y \oplusg{S_y} (r,v), \\
\text{and~}&c_z' = c_z, \text{~otherwise.}
\end{align*}

\begin{lemma}\label{lem:c}
If $a$ is an arrangement in $G$ with partition $\{S_1, \ldots, S_m\}$ and
transition $s = (r,u,v)$ is applicable to $a$ then $s$ is also applicable to
$\gamma(a)$, with
\[
\gamma(s(a)) \in s(\gamma(a))
\]
\end{lemma}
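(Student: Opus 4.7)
The plan is to unpack the two claims separately. First I verify that $s = (r,u,v)$ is applicable to $\gamma(a) = (c_1, \ldots, c_m)$, and then I identify the specific element of $s(\gamma(a))$ that equals $\gamma(s(a))$. Throughout, let $u \in S_x$ and $v \in S_y$ with $S_x \neq S_y$ (since $s$ is a subgraph transition), and write $a_i = a / S_i$ so that $c_i = \conf{S_i}{a_i}$ by definition of $\gamma(a)$.

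For applicability, the key observation is that the induced arrangement $a_x$ is itself a witness: since $s$ is applicable to $a$ we have $a(u) = r$, and because $u \in S_x$ this gives $a_x(u) = r$. Then $a_x \in c_x$ and $a_x(u) = r$ show that $\conf{S_x}{a_x \ominusg{S_x} r} \in c_x \ominusg{S_x} (r,u)$, so the set is non-empty. The argument for $c_y \oplusg{S_y} (r,v)$ is symmetric: $a(v) = \Box$ implies $a_y(v) = \Box$, and $a_y \in c_y$, so $\conf{S_y}{a_y \oplusg{S_y} (r,v)} \in c_y \oplusg{S_y} (r,v)$.

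For the membership claim, let $b = s(a)$ and $\gamma(b) = (c_1', \ldots, c_m')$ where $c_i' = \conf{S_i}{b/S_i}$. I want to show $(c_1', \ldots, c_m') \in s(\gamma(a))$, which means checking the three conditions in the definition. The main computation is to identify $b/S_i$ in each case: since $b$ differs from $a$ only at $u$ and $v$, and those lie in $S_x$ and $S_y$ respectively, we get $b/S_i = a_i$ for $i \notin \{x, y\}$, hence $c_i' = c_i$. For $i = x$, a direct check of the defining equations for $\ominus$ and $\oplus$ gives $b/S_x = a_x \ominusg{S_x} r$, and combined with $a_x(u) = r$ this yields $c_x' \in c_x \ominusg{S_x}(r,u)$. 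Symmetrically $b/S_y = a_y \oplusg{S_y}(r,v)$ gives $c_y' \in c_y \oplusg{S_y}(r,v)$.

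The whole argument is essentially bookkeeping: the only subtlety is justifying the restriction identities $b/S_x = a_x \ominusg{S_x} r$ and $b/S_y = a_y \oplusg{S_y}(r,v)$, which use crucially that $u$ and $v$ lie in different subgraphs so that the removal at $u$ and the addition at $v$ do not interfere with each other's restricted arrangements. I do not anticipate any real obstacle here beyond careful unpacking of the definitions of $\oplus$, $\ominus$, induced arrangement, and the operator $s(\gamma)$ on configuration tuples.
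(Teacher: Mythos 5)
Your proposal is correct and follows essentially the same route as the paper's proof: establish applicability by using the induced arrangements $a_x, a_y$ as witnesses for the non-emptiness of $c_x \ominusg{S_x}(r,u)$ and $c_y \oplusg{S_y}(r,v)$, then verify membership by computing $b/S_x$, $b/S_y$, and $b/S_z$ componentwise. Your explicit justification of the restriction identities $b/S_x = a_x \ominusg{S_x} r$ and $b/S_y = a_y \oplusg{S_y}(r,v)$ is a small point the paper passes over silently, but it is the same argument.
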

\begin{proof}
Let $S_x, S_y$ be disjoint subgraphs from the partition such that $u \in S_x$,
$v \in S_y$. Let $a_x = a / S_x$ and $a_y = a / S_y$. Let $\gamma(a) = (c_1,
\ldots, c_m)$. Now 
\begin{align*}
& a_x \in c_x \\
 & a_x(u) = r \\
\Rightarrow~  & c_x \ominus (r, u) \neq \emptyset.
\end{align*}
And similarly
\begin{align*}
& a_y \in c_y \\
 & a_y(v) = \Box \\
\Rightarrow~  & c_y \oplus (r, v) \neq \emptyset.
\end{align*}
Therefore $s$ is applicable in $\gamma(a)$.

Further, let $b = s(a)$ and $\gamma(b) = (c_1', \ldots, c_m')$. Now 
\begin{align*}
c_x' & = \conf{S_x}{b / S_x} \\
     & = \conf{S_x}{a_x \ominus r} \\
     & \in c_x \ominus (r,u)
\end{align*}
and
\begin{align*}
c_y' & = \conf{S_y}{b / S_y} \\
     & = \conf{S_y}{a_y \oplus (r,v)} \\
     & \in c_y \oplus (r,v)
\end{align*}
and
\begin{align*}
c_z' & = c_z.
\end{align*}
Therefore $\gamma(b) \in s(\gamma)$ as required.
\end{proof}

\begin{lemma}\label{lem:b}
If $s = (r,u,v)$ with $u,v \in S_x$ (i.e. $s$ is \emph{not} a transition) and $a$
is an arrangement in $G$ such that $s$ is applicable in $a$, 
then $\gamma(a) = \gamma(s(a))$.
\end{lemma}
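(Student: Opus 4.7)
The plan is to show component-wise that the two configuration tuples $\gamma(a)$ and $\gamma(s(a))$ are equal. Write $\gamma(a) = (c_1, \ldots, c_m)$ and $\gamma(s(a)) = (c_1', \ldots, c_m')$, and let $b = s(a)$. Since $s = (r,u,v)$ is applicable to $a$ with both $u$ and $v$ lying in the same subgraph $S_x$, the only vertices at which $a$ and $b$ differ are $u$ and $v$, both of which lie in $S_x$. Thus for every $z \neq x$ we have $b/S_z = a/S_z$, so $c_z' = c_z$ is immediate.

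The substantive step is the case $z = x$, where we must show $\conf{S_x}{a/S_x} = \conf{S_x}{b/S_x}$, i.e.\ that $a/S_x \simg{S_x} b/S_x$. By definition of $\simg{S_x}$ this requires exhibiting plans in both directions inside $S_x$. For the forward direction I would use the single-step plan $\seq{s}$: since $s$ is applicable to $a$ in $G$ and acts entirely within $S_x$, applicability $a(u) = r$ and $a(v) = \Box$ specialises to $a/S_x (u) = r$ and $a/S_x (v) = \Box$, so $\seq{s}$ is a legitimate plan in $S_x$ taking $a/S_x$ to $b/S_x$. For the reverse direction, I would use the reverse step $s' = (r,v,u)$; by the definition of $s(a)$ we have $b(v) = r$ and $b(u) = \Box$, so $s'$ is applicable to $b/S_x$ in $S_x$ and $s'(b/S_x) = a/S_x$.

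With plans in both directions constructed, $a/S_x \simg{S_x} b/S_x$ follows directly from the definition of the configuration relation, giving $c_x' = c_x$. Combining this with the coordinates $z \neq x$ handled above yields $\gamma(a) = \gamma(b)$.

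The only conceivable obstacle is ensuring that the reverse plan-step $s'$ is genuinely applicable to $b/S_x$ inside $S_x$, which reduces to checking that $(v,u) \in E(S_x)$; but because $S_x$ is an induced subgraph containing both $u$ and $v$, the edge $(u,v) \in E(G)$ (used by $s$) implies $(u,v) \in E(S_x)$ and hence $(v,u) \in E(S_x)$ as well (edges are undirected by the paper's standing assumption). So no real difficulty arises, and the lemma follows routinely from the definitions.
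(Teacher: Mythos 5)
Your proposal is correct and follows essentially the same route as the paper: decompose $\gamma$ componentwise, note $a/S_z = b/S_z$ for $z \neq x$, and use the single-step plan $\seq{s}$ inside $S_x$ to get $a/S_x \simg{S_x} b/S_x$. You are in fact slightly more careful than the paper's own proof, which asserts $a_x \simg{S_x} b_x$ from the forward plan alone, whereas you explicitly supply the reverse step $(r,v,u)$ that the definition of the configuration relation (plans in both directions) technically requires.
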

\begin{proof}
Let $b = s(a)$. Let $a_i = a / S_i$ and $b_i = b / S_i$ for all $i = 1\ldots
m$. Let $\gamma(a) = (c_1, \ldots, c_m)$ and $\gamma(b) = (c_1', \ldots,
c_m')$.

Now the plan $P_x = \seq{s}$ is a plan from $a_x$ to $b_x$ in $S_x$, so $a_x
\sim{} b_x$ implying $c_x = c_x'$. For all other $z \neq x$, we have $a_z = b_z$
so $c_z = c_z'$. Therefore $\gamma(a) = \gamma(b)$ as required. 
\end{proof}

Now we can define an \e{abstract plan} $\Pi$ from arrangement $\alpha$ to $\beta$ in $G$ as a tuple $(\Gamma, \Sigma)$ where $\Gamma$ is a sequence of configuration tuples 
$\seq{\gamma_0, \ldots, \gamma_l}$  and $\Sigma$ is a sequence of plan steps 
$\seq{s_1, \ldots, s_l}$, such that
\begin{align*}
& \gamma_0 = \gamma(\alpha), \\
& \gamma_l = \gamma(\beta), \\
& s_i \text{~is applicable in~} \gamma_{i-1}, \\
\text{and~}& \gamma_i \in s(\gamma_{i-1}). 
\end{align*}

\begin{theorem} 
An abstract plan from $\alpha$ to $\beta$ in $G$ exists if and only if
there exists a corresponding concrete plan $P$ from $\alpha$ to $\beta$ in $G$.
\end{theorem}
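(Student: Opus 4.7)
The plan is to prove the two directions separately, each by induction on the length of the relevant plan. The main tools are Lemma~\ref{lem:a} (arrangements in the same configuration tuple are concretely reachable from one another), Lemma~\ref{lem:b} (non-transition steps do not change the configuration tuple), Lemma~\ref{lem:c} (a transition applicable to an arrangement is applicable to its configuration tuple and preserves the induced tuple), and the interleaving lemma for disjoint subgraphs.

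For the forward direction (abstract $\Rightarrow$ concrete), I would induct on the length $l$ of the abstract plan $(\Gamma, \Sigma)$. The base case $l = 0$ gives $\gamma(\alpha) = \gamma_0 = \gamma(\beta)$, so $\alpha$ and $\beta$ lie in a common configuration tuple and Lemma~\ref{lem:a} produces the required concrete plan. For the inductive step, let the first transition be $s_1 = (r,u,v)$ with $u \in S_x$, $v \in S_y$, moving $\gamma_0$ to $\gamma_1 = (c_1,\ldots,c_x^\ast,\ldots,c_y^\ast,\ldots,c_m)$. By the definitions of $\oplus$ and $\ominus$ on configurations, I can pick witnesses $a_x \in c_x$ with $a_x(u)=r$ and $\conf{S_x}{a_x \ominus r} = c_x^\ast$, and $a_y \in c_y$ with $a_y(v)=\Box$ and $\conf{S_y}{a_y \oplus (r,v)} = c_y^\ast$. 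Since $\alpha/S_x \simg{S_x} a_x$ and $\alpha/S_y \simg{S_y} a_y$, rearrangement plans exist in each of these two disjoint subgraphs. Interleaving them (the interleaving lemma) produces a concrete plan from $\alpha$ to an intermediate arrangement $\alpha'$ that agrees with $a_x$ on $S_x$, with $a_y$ on $S_y$, and with $\alpha$ elsewhere. Then $s_1$ is concretely applicable to $\alpha'$, and a direct computation (in the spirit of Lemma~\ref{lem:c}) shows $\gamma(s_1(\alpha')) = \gamma_1$. The induction hypothesis applied to $(\seq{\gamma_1,\ldots,\gamma_l}, \seq{s_2,\ldots,s_l})$ yields a concrete plan from $s_1(\alpha')$ to $\beta$; concatenating the rearrangement, $s_1$, and this tail gives the desired concrete plan.

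For the reverse direction (concrete $\Rightarrow$ abstract), I would take a concrete plan $P = \seq{s_1,\ldots,s_l}$ with intermediate arrangements $a_0 = \alpha,\ldots,a_l = \beta$ and discard every step that is not a subgraph transition. Let $i_1 < \cdots < i_t$ be the indices of the transition steps and set $\gamma_0 = \gamma(\alpha)$, $\gamma_j = \gamma(a_{i_j})$ for $j=1,\ldots,t$, and $\Sigma = \seq{s_{i_1},\ldots,s_{i_t}}$. Lemma~\ref{lem:b} applied to every non-transition step gives $\gamma(a_{i_{j-1}}) = \gamma(a_{i_{j-1}+1}) = \cdots = \gamma(a_{i_j - 1})$, so $\gamma_{j-1} = \gamma(a_{i_j - 1})$, and in particular $\gamma_t = \gamma(a_l) = \gamma(\beta)$. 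Lemma~\ref{lem:c} then shows $s_{i_j}$ is applicable in $\gamma_{j-1}$ with $\gamma_j \in s_{i_j}(\gamma_{j-1})$, which is exactly what an abstract plan requires.

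The hard part is the forward direction, and specifically the need to hit the particular successor tuple $\gamma_1$ demanded by the abstract plan rather than some arbitrary element of $s_1(\gamma_0)$. The trick is to read the definitions of $c \ominusg{S_x} (r,u)$ and $c \oplusg{S_y} (r,v)$ as existential statements and pull out witness arrangements $a_x, a_y$ whose abstract images are precisely $c_x^\ast$ and $c_y^\ast$; because $S_x$ and $S_y$ are disjoint, the two local rearrangements do not interfere and can be combined via the interleaving lemma, giving a concrete arrangement whose configuration tuple is forced to be $\gamma_1$. Once that bookkeeping is in place the induction runs cleanly, and the reverse direction is essentially a matter of stripping out the non-transition steps.
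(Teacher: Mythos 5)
Your proposal is correct and follows essentially the same route as the paper's own proof: in the forward direction you extract witness arrangements from the definitions of $c \ominus (r,u)$ and $c \oplus (r,v)$ so that the rearranged arrangement's image is forced to be the prescribed successor tuple, rearrange within the (disjoint) subgraphs via the configuration equivalence and the interleaving lemma, and concatenate; in the reverse direction you strip non-transition steps using Lemma~\ref{lem:b} and apply Lemma~\ref{lem:c} to the transitions. The only cosmetic difference is that you organise the forward direction as an induction on the length of the abstract plan, whereas the paper constructs all the intermediate arrangements $a^i, b^i$ up front and then stitches the segments together, but the underlying construction is identical.
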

\begin{proof} \textbf{Case} $(\Pi \Rightarrow P)$:

Let $\Pi = (\Gamma, \Sigma)$ be an abstract plan on $G$ from $\alpha$ to 
$\beta$, with $\Gamma = \seq{\gamma_0, \ldots, \gamma_l}$  and $\Sigma = 
\seq{s_1, \ldots, s_l}$. Let $\gamma_i = (c^i_0, \ldots, c^i_m)$.

We shall construct a concrete plan 
\[P = P_0 . \seq{s_1} . P_1 .\cdots . P_{l-1} . \seq{s_l} . P_l\] 
where each $P_i$ is a concrete plan from $a^i$ to $b^i$, satisfying
\begin{align*}
& a^0 = \alpha, \\
& b^l = \beta, \\
& a^i, b^i \in \gamma_i, \\
& s_{i+1} \text{~is~applicable~in~} b^i, \\
\text{and~}& a^{i+1} = s_{i+1}(b^i), ~\forall i = 0 \ldots l-1.
\end{align*}

\begin{proposition} $a^i$ and $b^i$ exist satisfying these conditions for all
$i = 1 \ldots \l$.
\end{proposition}
\begin{proof} by induction:

$a^0 = \alpha$ therefore $a^0$ exists.

Assume $a^i$ exists:

Let $s_{i+1} = (r, u, v)$ with $u \in S_x$ and $v \in S_y$. From the definition
of an abstract plan, $s_{i+1}$
is applicable in $\gamma_{i}$, and $\gamma_{i+1} = s_{i+1}(\gamma_i)$. Therefore
\begin{align*}
&c^{i+1}_x \in c^i_x \ominus (r, u) \neq \emptyset \\
\Rightarrow~&c^{i+1}_x \in \setof{\conf{}{a \ominus (r,u)}}{a(u) = r, a \in c^i_x} \neq
\emptyset \\
\Rightarrow~ &\exists a \in c^i_x : a(u) = r
\end{align*}
Set $b^i_x$ equal to this $a$. We now have
\begin{align*}
&c^{i+1}_x  = ~\conf{}{b^i_x \ominus (r,u)} \\
\Rightarrow~&b^i_x \ominus (r,u)  \in c^{i+1}_x
\end{align*}
Also
\begin{align*}
&c^{i+1}_y \in c^i_y \oplus (r, v) \neq \emptyset \\
\Rightarrow~&c^{i+1}_y \in \setof{\conf{}{a \oplus (r,v)}}{a(v) = \Box, a \in c^i_y} \neq \emptyset\\
\Rightarrow~&\exists a \in c^i_y : a(v) = \Box
\end{align*}
Set $b^i_y$ equal to this $a$. We now have
\begin{align*}
&c^{i+1}_y = ~\conf{}{b^i_y \oplus (r,v)}\\
\Rightarrow~&b^i_y \oplus (r,v) \in c^{i+1}_y
\end{align*}

Set $b^i_z = a^i / S_z$ for all other $z \notin \{x,y\}$

$b^i_j$ is now defined for every subgraph $S_j$ in the partition of $G$.
Therefore $b^i = b^i_1 \otimes \dots \otimes b^i_m$ exists and is an arrangement
in $G$. 

So if $a^i$ exists then $b^i$ also exists for all $i = 0 \ldots l-1$.

Now $s_{i+1}$ is applicable in $b^i$ since
\begin{align*}
b^i(u) &= b^i_x(u) = r\\
b^i(v) &= b^i_y(v) = \Box
\end{align*}

So $a^{i+1} = s_{i+1}(b^i)$ exists, and 

\begin{align*}
a^{i+1} / S_x & =   b^i_x \ominus r \in c^{i+1}_x \\
a^{i+1} / S_y & =   b^i_y \oplus (r,v)  \in c^{i+1}_y \\
a^{i+1} / S_z & =   b^i_z, \forall z \notin \{x, y\} \in c^i_z \\
              & \in c^{i+1}_z
\end{align*}
So
\[a^{i+1} \in \gamma_{i+1} \]

By induction, $a^i \in \gamma_i$ exists for all $i = 0 \ldots l$ and 
$b^i \in \gamma_i$ exists for all $i = 0 \ldots l-1$. Furthermore $b^l = \beta \in 
\gamma(\beta) = \gamma_l$, so $b^i$ exists for all $i$ for all $i = 0 \ldots 
l$, as required.
\end{proof}

\begin{proposition} A concrete plan $P_i$ from $a^i$ to $b^i$ exists, for $i =
0, \ldots, l$
\end{proposition}
\begin{proof} Since $a^i, b^i \in \gamma_i$ a plan $P_i$ must exist from 
$a_i$ to $b_i$, by Lemma~\ref{lem:a} above.
\end{proof}

\begin{proposition} $P$ is a concrete plan from $\alpha$ to $\beta$ in $G$.
\end{proposition}
\begin{proof} $P_i$ is a plan from $a^i$ to $b^i$ for all $i = 0, \ldots, l$. Furthermore
$a_{i+1} = s_{i+1}(b_i)$, so $\seq{s_{i+1}}$ is a plan from $b_i$ to $a_{i+1}$. 
Therefore by the concatenation of plans
\[P = P_0 . \seq{s_1} . \cdots . \seq{s_l} . P_l\]
is a plan in $G$ from $a_0 = \alpha$ to $b_l = \beta$, as required.
\end{proof}

\textbf{Case} $(P \Rightarrow \Pi)$:

Let $P = \seq{s_1, \ldots, s_L}$ be a concrete plan from $\alpha$ to $\beta$ in 
$G$. We wish to construct an abstract plan $\Pi = (\Gamma, \Sigma)$ from 
$\alpha$ to $\beta$ in $G$.

Let $T = \seq{t_0, \ldots, t_l}$ be an increasing sequence of integers with 
$t_0 = 0$ and $t_i = t$ iff $s_t$ is a subgraph transition. (Note: we are using capital $L$ to designate the length of the \emph{concrete} plan $P$ and lowercase $l$ to designate the number of transitions in that plan, which will be the length of the corresponding \emph{abstract} plan $\Pi$.) 

Now construct the sequence of arrangements $A = \seq{\alpha_0, \ldots,
\alpha_L}$ such that
\begin{align*}
\alpha_0 & = \alpha \\
\alpha_{i+1} & = s_{i+1}(\alpha_i), \forall i = 0 \ldots L-1
\end{align*}
and split $A$ into subsequences $A_0, \ldots, A_l$ such that
\[A_i = \seq{\alpha_{t_i}, \ldots, \alpha_{t_{i+1} - 1}}\]

Define $\gamma_i = \gamma(\alpha_{t_i}), \forall i = 0, \ldots, l$, $\Sigma =
\seq{\gamma_0, \ldots, \gamma_l}$ and $\Sigma = \seq{s_{t_1}, \ldots, s_{t_l}}$.

\begin{proposition} $\forall a : a \in A_i \Rightarrow a \in \gamma_i$
\end{proposition}
\begin{proof} by induction:

By definition,
\[\alpha_{t_i} \in \gamma(\alpha_{t_i}) = \gamma_i\]

Now assume $\alpha_t \in \gamma_i$ for $t = t_i + j, j < |A_i| - 1$. We need to
prove $\alpha_{t+1} \in \gamma_i$.

Let $s_{t+1} = (r, u, v)$. Since $t+1 \notin T$ we must have $u, v \in S_x$. So
using Lemma~\ref{lem:b} above
\begin{align*}
\gamma(\alpha_{t+1}) & = \gamma(s_{t+1}(\alpha_t)) \\
                     & = \gamma(\alpha_t) \\
                     & = \gamma_i.§
\end{align*}
Therefore, by induction
\[a \in \gamma_i, \forall a \in A_i\] as required.
\end{proof}

\begin{proposition} $\Pi = (\Gamma, \Sigma)$ is a valid abstract plan from
$\alpha$ to $\beta$.
\end{proposition}
\begin{proof}
First we check that the initial and final configuration-tuples contain $\alpha$
and $\beta$ respectively: 
\begin{align*}
& \gamma_0 = \gamma(\alpha_0) = \gamma(\alpha).
\end{align*}
and 
\begin{align*}
 & \beta \in A_l \\
\Rightarrow~& \beta \in \gamma_l, \\
\Rightarrow~& \gamma_l = \gamma(\beta).
\end{align*}

Now, for each $i = 0  \ldots l-1$ let $b^i = \alpha_{t_{i+1}-1}$ (i.e. the
final element of $A_i$), and let $b^i_z = b^i / S_z$ for $z=1 \ldots m$.

Let $s = s_{t_{i+1}} = (r, u, v)$ with $u \in S_x$ and $v \in S_y$. Now
$s$ is applicable in $b^i$ by the definition of $P$. Therefore, by
Lemma~\ref{lem:c} above, $s$ is applicable in $\gamma_i$ and 
\begin{align*}
\gamma_{i+1} &= \gamma(a^{i+1}) \\
            &= \gamma(s(b^i)) \\
            &\in s(\gamma(b^i)) = s(\gamma_i), \text{~as~required.}
\end{align*}
\end{proof} 

Therefore $\Pi$ is a valid abstract plan.
\end{proof}

This theorem is significant for our planning problem. It tells us that we do not
need to perform a search of all concrete plans. Instead, we need only search for
an abstract plan and then convert it into a concrete form. Such a search will
succeed if and only if a concrete plan exists.

\bibliography{jair06}

\begin{thebibliography}{}

\bibitem[\protect\BCAY{Alami, Fleury, Herrb, Ingrand,\ \BBA\ Robert}{Alami
  et~al.}{1998}]{alami1998mrc}
Alami, R., Fleury, S., Herrb, M., Ingrand, F., \BBA\ Robert, F. \BBOP1998\BBCP.
\newblock \BBOQ {Multi-robot cooperation in the MARTHA project}\BBCQ\
\newblock {\Bem Robotics \& Automation Magazine, IEEE}, {\Bem 5\/}(1), 36--47.

\bibitem[\protect\BCAY{Alarie\ \BBA\ Gamache}{Alarie\ \BBA\
  Gamache}{2002}]{alarie2002oss}
Alarie, S.\BBACOMMA\  \BBA\ Gamache, M. \BBOP2002\BBCP.
\newblock \BBOQ {Overview of Solution Strategies Used in Truck Dispatching
  Systems for Open Pit Mines}\BBCQ\
\newblock {\Bem International Journal of Surface Mining, Reclamation and
  Environment}, {\Bem 16\/}(1), 59--76.

\bibitem[\protect\BCAY{Bakker, Zivkovic,\ \BBA\ Kr\"{o}se}{Bakker
  et~al.}{2005}]{bakker2005hdp}
Bakker, B., Zivkovic, Z., \BBA\ Kr\"{o}se, B. \BBOP2005\BBCP.
\newblock \BBOQ {Hierarchical dynamic programming for robot path
  planning}\BBCQ\
\newblock {\Bem Proceedings of IEEE/RSJ International Conference on Intelligent
  Robots and Systems}, 2756--2761.

\bibitem[\protect\BCAY{Barbehenn\ \BBA\ Hutchinson}{Barbehenn\ \BBA\
  Hutchinson}{1995}]{Barbehenn1995}
Barbehenn, M.\BBACOMMA\  \BBA\ Hutchinson, S. \BBOP1995\BBCP.
\newblock \BBOQ Efficient search and hierarchical motion planning by
  dynamically maintaining single-source shortest paths trees\BBCQ\
\newblock {\Bem IEEE transactions on robotics and automation}, {\Bem 11\/}(2),
  198--214.

\bibitem[\protect\BCAY{Barraquand\ \BBA\ Latombe}{Barraquand\ \BBA\
  Latombe}{1991}]{barraquand91}
Barraquand, J.\BBACOMMA\  \BBA\ Latombe, J.-C. \BBOP1991\BBCP.
\newblock \BBOQ {Robot motion planning: A distributed representation
  approach}\BBCQ\
\newblock {\Bem International Journal of Robotics Research}, {\Bem 10\/}(6),
  628--649.

\bibitem[\protect\BCAY{Botea, M{\"u}ller,\ \BBA\ Schaeffer}{Botea
  et~al.}{2003}]{botea2003}
Botea, A., M{\"u}ller, M., \BBA\ Schaeffer, J. \BBOP2003\BBCP.
\newblock \BBOQ Using abstraction for planning in sokoban\BBCQ\
\newblock In {\Bem Computers and Games: Lecture Notes in Computer Science},
  \lowercase{\BVOL}\ 2883, \BPGS\ 360--375. Springer.

\bibitem[\protect\BCAY{Buro\ \BBA\ Furtak}{Buro\ \BBA\
  Furtak}{2004}]{buro2004rga}
Buro, M.\BBACOMMA\  \BBA\ Furtak, T. \BBOP2004\BBCP.
\newblock \BBOQ {RTS games and real-time AI research}\BBCQ\
\newblock {\Bem Proceedings of the Behavior Representation in Modeling and
  Simulation Conference (BRIMS), Arlington VA 2004}, 51--58.

\bibitem[\protect\BCAY{Choset}{Choset}{1996}]{Choset1996}
Choset, H. \BBOP1996\BBCP.
\newblock {\Bem Sensor based motion planning: The hierarchical generalized
  voronoi graph}.
\newblock Ph.D.\ thesis, California Institute of Technology, Pasadena,
  California.

\bibitem[\protect\BCAY{Choset\ \BBA\ Burdick}{Choset\ \BBA\
  Burdick}{1995}]{choset1995sbp}
Choset, H.\BBACOMMA\  \BBA\ Burdick, J. \BBOP1995\BBCP.
\newblock \BBOQ {Sensor based planning. I. The generalized Voronoi graph}\BBCQ\
\newblock {\Bem Proceedings of the International Conference on Robotics and
  utomation}, {\Bem 2}.

\bibitem[\protect\BCAY{Conte\ \BBA\ Zulli}{Conte\ \BBA\
  Zulli}{1995}]{Conte1995}
Conte, G.\BBACOMMA\  \BBA\ Zulli, R. \BBOP1995\BBCP.
\newblock \BBOQ Hierarchical path planning in a multi-robot environment with a
  simple navigation function\BBCQ\
\newblock {\Bem IEEE Transactions on Systems, Man and Cybernetics}, {\Bem
  25\/}(4), 651--654.

\bibitem[\protect\BCAY{Erdmann\ \BBA\ Lozano-P\'{e}rez}{Erdmann\ \BBA\
  Lozano-P\'{e}rez}{1986}]{erdmann86}
Erdmann, M.\BBACOMMA\  \BBA\ Lozano-P\'{e}rez, T. \BBOP1986\BBCP.
\newblock \BBOQ {On Multiple Moving Objects}\BBCQ\
\newblock \BTR\ 883, M.I.T. AI Laboratory.

\bibitem[\protect\BCAY{Everett, Gage, Gilbreth, Laird,\ \BBA\ Smurlo}{Everett
  et~al.}{1994}]{everett1994rwi}
Everett, H., Gage, D., Gilbreth, G., Laird, R., \BBA\ Smurlo, R.
  \BBOP1994\BBCP.
\newblock \BBOQ {Real-world issues in warehouse navigation}\BBCQ\
\newblock {\Bem Proceedings of the SPIE Conference on Mobile Robots IX}, {\Bem
  2352}.

\bibitem[\protect\BCAY{Fitch, Butler,\ \BBA\ Rus}{Fitch
  et~al.}{2003}]{fitch2003rph}
Fitch, R., Butler, Z., \BBA\ Rus, D. \BBOP2003\BBCP.
\newblock \BBOQ {Reconfiguration planning for heterogeneous self-reconfiguring
  robots}\BBCQ\
\newblock {\Bem Proceedings of IEEE/RSJ International Conference on Intelligent
  Robots and Systems}, {\Bem 3}, 2460--2467.

\bibitem[\protect\BCAY{{Gecode Team}}{{Gecode Team}}{2006}]{gecode}
{Gecode Team} \BBOP2006\BBCP.
\newblock \BBOQ Gecode: Generic constraint development environment,\BBCQ\
\newblock Available from http://www.gecode.org.

\bibitem[\protect\BCAY{Hada\ \BBA\ Takase}{Hada\ \BBA\
  Takase}{2001}]{hada2001mmr}
Hada, Y.\BBACOMMA\  \BBA\ Takase, K. \BBOP2001\BBCP.
\newblock \BBOQ {Multiple mobile robot navigation using the indoor global
  positioning system (iGPS)}\BBCQ\
\newblock {\Bem Proceedings of IEEE/RSJ International Conference on Intelligent
  Robots and Systems}, {\Bem 2}.

\bibitem[\protect\BCAY{Junghanns\ \BBA\ Schaeffer}{Junghanns\ \BBA\
  Schaeffer}{2001}]{junghanns2001}
Junghanns, A.\BBACOMMA\  \BBA\ Schaeffer, J. \BBOP2001\BBCP.
\newblock \BBOQ Sokoban: Enhancing general single-agent search methods using
  domain knowledge\BBCQ\
\newblock {\Bem Artificial Intelligence}, {\Bem 129\/}(1-2), 219--251.

\bibitem[\protect\BCAY{LaValle}{LaValle}{2006}]{lavalle06}
LaValle, S.~M. \BBOP2006\BBCP.
\newblock {\Bem Planning {A}lgorithms}.
\newblock Cambridge University Press.

\bibitem[\protect\BCAY{LaValle\ \BBA\ Hutchinson}{LaValle\ \BBA\
  Hutchinson}{1998}]{lavalle98}
LaValle, S.~M.\BBACOMMA\  \BBA\ Hutchinson, S.~A. \BBOP1998\BBCP.
\newblock \BBOQ {Optimal Motion Planning for Multiple Robots Having Independent
  Goals}\BBCQ\
\newblock In {\Bem {IEEE} {T}ransactions on {R}obotics and {A}utomation},
  \lowercase{\BVOL}~14.

\bibitem[\protect\BCAY{Long\ \BBA\ Fox}{Long\ \BBA\ Fox}{2002}]{longfox02}
Long, D.\BBACOMMA\  \BBA\ Fox, M. \BBOP2002\BBCP.
\newblock {\Bem Planning with Generic Types}, \BCH~4, \BPGS\ 103--138.
\newblock Morgan Kaufmann.

\bibitem[\protect\BCAY{Porteous, Long,\ \BBA\ Fox}{Porteous
  et~al.}{2004}]{porteous04}
Porteous, J., Long, D., \BBA\ Fox, M. \BBOP2004\BBCP.
\newblock \BBOQ {The Identification and Exploitation of Almost Symmetry in
  Planning Problems}\BBCQ\
\newblock In Brown, K.\BED, {\Bem Proceedings of the 23rd {UK} {P}lanning and
  {S}cheduling {SIG}}.

\bibitem[\protect\BCAY{Sacerdoti}{Sacerdoti}{1974}]{sacerdoti1974}
Sacerdoti, E. \BBOP1974\BBCP.
\newblock \BBOQ Planning in a hierarchy of abstraction spaces\BBCQ\
\newblock {\Bem Artificial Intelligence}, {\Bem 5\/}(2), 115--135.

\bibitem[\protect\BCAY{Sanchez, Ramos,\ \BBA\ Frausto}{Sanchez
  et~al.}{1999}]{sanchez1999lop}
Sanchez, G., Ramos, F., \BBA\ Frausto, J. \BBOP1999\BBCP.
\newblock \BBOQ {Locally-Optimal Path Planning by Using Probabilistic Road Maps
  and Simulatead Annealing}\BBCQ\
\newblock {\Bem Proceedings of IASTED International Conference on Robotics and
  Applications}.

\bibitem[\protect\BCAY{Sharma\ \BBA\ Aloimonos}{Sharma\ \BBA\
  Aloimonos}{1992}]{sharma1992cmp}
Sharma, R.\BBACOMMA\  \BBA\ Aloimonos, Y. \BBOP1992\BBCP.
\newblock \BBOQ {Coordinated motion planning: the warehouseman's problem with
  constraints on free space}\BBCQ\
\newblock {\Bem IEEE Transactions on Systems, Man and Cybernetics}, {\Bem
  22\/}(1), 130--141.

\bibitem[\protect\BCAY{van~den Berg\ \BBA\ Overmars}{van~den Berg\ \BBA\
  Overmars}{2005}]{vandenberg05}
van~den Berg, J.\BBACOMMA\  \BBA\ Overmars, M. \BBOP2005\BBCP.
\newblock \BBOQ {Prioritized Motion Planning for Multiple Robots}\BBCQ\
\newblock In {\Bem Proceedings of IEEE/RSJ International Conference on
  Intelligent Robots and Systems}, \BPGS\ 430--435.

\bibitem[\protect\BCAY{Zivkovic, Bakker,\ \BBA\ Kr\"{o}se}{Zivkovic
  et~al.}{2005}]{zivkovic2005hmb}
Zivkovic, Z., Bakker, B., \BBA\ Kr\"{o}se, B. \BBOP2005\BBCP.
\newblock \BBOQ {Hierarchical map building using visual landmarks and geometric
  constraints}\BBCQ\
\newblock {\Bem Proceedings of IEEE/RSJ International Conference on Intelligent
  Robots and Systems}, 2480--2485.

\bibitem[\protect\BCAY{Zivkovic, Bakker,\ \BBA\ Kr\"{o}se}{Zivkovic
  et~al.}{2006}]{zivkovic2006hmb}
Zivkovic, Z., Bakker, B., \BBA\ Kr\"{o}se, B. \BBOP2006\BBCP.
\newblock \BBOQ {Hierarchical Map Building and Planning based on Graph
  Partitioning}\BBCQ\
\newblock {\Bem IEEE International Conference on Robotics and Automation}.

\end{thebibliography}
\bibliographystyle{theapa}

\end{document}